\numberwithin{equation}{section}
\theoremstyle{plain}
\newtheorem{theorem}{Theorem}[section]
\newtheorem{corollary}[theorem]{Corollary}
\theoremstyle{remark}
\newtheorem{definition}[theorem]{Definition}
\begin{document}

\begin{frontmatter}
	\title{A Unified Theory of Exact Inference\\ and Learning in Exponential Family\\ Latent Variable Models}
	\runtitle{Conjugated Harmoniums}

	\begin{aug}
		\author[A]{\fnms{Sacha}~\snm{Sokoloski}\ead[label=e1]{sacha.sokoloski@mailbox.org}}
		\address[A]{Hertie Institute for AI in Brain Health, University of Tübingen\printead[presep={\ }]{e1}.}
	\end{aug}

	\begin{abstract}
		Bayes' rule describes how to infer posterior beliefs about latent variables given observations, and inference is a critical step in learning algorithms for latent variable models (LVMs). Although there are exact algorithms for inference and learning for certain LVMs such as linear Gaussian models and mixture models, researchers must typically develop approximate inference and learning algorithms when applying novel LVMs. Here we study the line that separates LVMs that rely on approximation schemes from those that do not, and develop a general theory of exponential family LVMs for which inference and learning may be implemented exactly. Firstly, under mild assumptions about the exponential family form of the LVM, we derive a necessary and sufficient constraint on the parameters of the LVM under which the prior and posterior over the latent variables are in the same exponential family. We then show that a variety of well-known and novel models indeed have this constrained, exponential family form. Finally, we derive generalized inference and learning algorithms for these LVMs, and demonstrate them with a variety of examples. Our unified perspective facilitates both understanding and implementing exact inference and learning algorithms for a wide variety of models, and may guide researchers in the discovery of new models that avoid unnecessary approximations.
	\end{abstract}

	\begin{keyword}
		\kwd{Bayesian Inference}
		\kwd{Maximum Likelihood}
		\kwd{Exponential Families}
		\kwd{Latent Variable Models}
	\end{keyword}

\end{frontmatter}

\section{Introduction}

Latent variable models (LVMs) describe data in terms of observable random variables that we measure directly, and latent random variables that we do not measure, but that help explain our observations. The probabilistic formulation of LVMs affords general strategies for two fundamental operations when applying them: (i) Bayes' rule describes how to infer posterior beliefs about latent variables given prior beliefs and observations; and (ii) the principle of maximum likelihood --- or equivalently minimal cross-entropy, reduces learning to minimizing the cross-entropy of the LVM parameters given the data~\cite{roweis_unifying_1999,bishop_pattern_2006}. Because LVMs can often be scaled to achieve arbitrary model complexity, the central limitation of LVMs is rarely whether they are sufficiently powerful to model a given dataset, but rather whether inference and learning for a given LVM are computationally tractable.

There are numerous techniques for approximate inference and learning with LVMs~\cite{wainwright_graphical_2008,kingma_introduction_2019, martin_approximating_2024}. Variational methods, for example, approximate intractable LVM posteriors with tractable distributions by minimizing the divergence of the latter from the former, and these approximate posteriors can then be used to drive approximate learning algorithms for LVMs. The variational approach is particularly widespread in machine learning, where it has been used to develop powerful models of hierarchical~\cite{hinton_fast_2006,salakhutdinov_efficient_2012,vertes_flexible_2018} and dynamic latent structure~\cite{taylor_two_2011,boulanger-lewandowski_modeling_2012,durstewitz_state_2017}, which have in turn been applied to modelling biological neural circuits~\cite{beck_complex_2012}, cognitive modelling~\cite{salakhutdinov_learning_2013,lake_humanlevel_2015}, and image synthesis~\cite{ho_denoising_2020}.

Nevertheless, there are well-known LVMs for which inference and learning can be implemented exactly --- by exact inference and learning we mean, more or less, that there is a tractable expression for the posterior over the latent variables, and that we may minimize the cross-entropy of the model parameters directly, rather than via an approximation or lower-bound that might introduce approximation errors~\cite{shekhovtsov_vae_2021}. For example, there are closed-form expressions for the posteriors of mixture models and linear Gaussian models (which includes principal component analysis and factor analysis as special cases), and exact implementations of expectation-maximization (EM) for training them~\cite{bishop_pattern_2006}.

In this paper we study the boundary that separates models that require approximation techniques for inference and learning from those that do not, and derive a general theory of exact learning and inference for a broad class of exponential family LVMs. We begin our approach by analyzing the conditions under which priors and posteriors over latent variables share the same exponential family form. In general, priors are known as conjugate priors when they have the same parametric form as the posterior~\cite{diaconis_conjugate_1979,arnold_conjugate_1993}, and conjugate priors are widely-applied to inferring posteriors over the parameters of exponential family models. In this study we generalize this notion of conjugacy to LVMs, where there is a separation between latent variables and model parameters, and we use the machinery of conjugate priors to infer posteriors over the latent variables.

We show that if the likelihood of the observations given the latent variables is exponential family distributed, then the latent variable prior is conjugate to the likelihood if and only if (i) the LVM has a particular exponential-family form, and (ii) the LVM parameters satisfy a constraint. The exponential-family form in question has a long history~\cite{besag_spatial_1974,arnold_compatible_1989,arnold_conditionally_2001,yang_graphical_2015,tansey_vector-space_2015}, and models with this form have been variously referred to as conditionally specified distributions~\cite{arnold_compatible_1989}, vector space Markov random fields~\cite{tansey_vector-space_2015}, and exponential family harmoniums~\cite{smolensky_information_1986,welling_exponential_2005} --- here we refer to them simply as harmoniums. We then show that for harmoniums, the prior is conjugate to the likelihood if and only if it satisfies an equation on its exponential family parameters. We refer to harmoniums that satisfy this equation as conjugated harmoniums.

We show that both mixture models and linear Gaussian models are forms of conjugated harmonium. Outside of these well-known cases, we also study harmoniums defined variously in terms of von Mises distributions, Poisson distributions, and Boltzmann machines, and show that they are conjugated under certain conditions.

Inference and learning algorithms for LVMs are often developed on a case-by-case basis, yet we show that many algorithms are special cases of a set of general algorithms for inference and learning with conjugated harmoniums. This not only facilitates theoretical unification, but also simplifies the implementation of learning and inference algorithms, as it allows programmers to implement one set of algorithms for a wide variety of cases. Indeed, we have developed a collection of Haskell libraries for numerical optimization based on our theory, and all the models that we demonstrate in this paper were implemented with these libraries (code available as Supplementary Material~\cite{supplement} and at \url{https://github.com/alex404/goal}).

\section{Background}

In this section we introduce the exponential family background for our unified theory --- for a thorough treatment of exponential families see~\cite{amari_methods_2007,wainwright_graphical_2008}. We rely on several notational conventions, and although we introduce this notation over the course of the text, we summarize key features here.

In general, we use Latin letters (e.g. $x$ or $z$) for observations and latent states, and Greek letters (e.g. $\theta$ or $\eta$) for model parameters. We use lowercase letters (e.g. $x$ or $\theta$) to denote scalars, bold, lowercase letters (e.g. $\V x$ or $\eprms$) to denote vectors, and bold, capital letters (e.g.\ $\V X$ or $\iprms$) to denote matrices --- when the form of the variable (e.g.\ scalar or vector) is not specified, we default to non-bold, lowercase letters. We use capital, italic letters (e.g. $X$ and $Z$) to denote random variables, regardless if they are scalars, vectors, or matrices. We use calligraphic letters in the latter part of the Latin alphabet (e.g. $\mathcal X$ or $\mathcal Z$) to denote the sample space of a random variable, calligraphic letters otherwise (e.g.\ $\mathcal M$) to denote statistical models, and capital Greek letters (e.g. $\Theta$ or $\Eta$) to denote parameter spaces. We use $P$ and $p$ to denote probability distributions and densities of random variables, respectively, and $Q$ and $q$ to denote model distributions and densities, respectively. Finally, to indicate that a mathematical object is related to a particular random variable or sample space, we subscript it with capital, italic letters. For example, $P_X$ and $p_X$ denote the probability distribution and density of $X$, respectively. Similarly, $\mathcal M_X$ is a model on the sample space $\mathcal X$ of some random variable $X$ of interest, parameterized e.g.\ by the space $\Theta_X$, with contains elements $\eprms_X \in \Theta_X$.

\subsection{Exponential Families}

Consider a random variable $X$ on the sample space $\mathcal X$ with unknown distribution $P_X$, and suppose $X^{(1)}, \dots, X^{(n)}$ is an independent and identically distributed sample from $P_X$. One strategy for modelling $P_X$ based on the sample is to first define a ``sufficient'' statistic $\V s_X \colon \mathcal X \to \Eta_X$ that captures features of interest about $X$. We then look for a probability distribution $Q_X$ whose expectation of the sufficient statistic $\E_Q[\V s_X(X)]$ matches the average of the sufficient statistic over the data $\frac{1}{n}\sum_{i=1}^n \V s_X(X^{(i)})$, where the expected value of $f(X)$ under $Q_X$ is defined by $\E_Q[f(X)] = \int_{\mathcal X} f dQ_X$. To further constrain the space of possible distributions we also assume that $Q_X$ maximizes the entropy $E_Q[-\log q_X]$, where $q_X = dQ_X/d\mu_X$ is the density function (Radon-Nikodym derivative) of $Q_X$ with respect to some base measure $\mu_X$. Based on these assumptions, it can be shown that $q_X$ must have the exponential family form
\begin{equation}\label{eq:exponential-family-density}
	\log q_X(x) = \V s_X(x) \cdot \eprms_X - \psi_X(\eprms_X),
\end{equation}
where $\eprms_X$ are the natural parameters, and $\psi_X(\eprms_X) = \log \int_{\mathcal X}e^{\V s_X(x) \cdot \eprms_X}d\mu_X(x)$ is the log-partition function. We also refer to $\mprms_X = \E_Q[\V s_X(X)]$ as the mean parameters of $q_X$.

It is straightforward to show that $\partial_{\eprms_X} \psi_X(\eprms_X) = \E_Q[\V s_X(X)] = \mprms_X$, so that we may easily identify the mean parameters $\mprms_X$ of $q_X$ given its natural parameters $\eprms_X$. However, the natural parameters $\eprms_X$ for a given density $q_X$ might not be unique, in the sense that different natural parameters might yield the same density function as defined by Equation~\ref{eq:exponential-family-density}. To address this we may further assume that the sufficient statistic $\V s_X$ is minimal, in the sense that the component functions ${\{ s_{X,i}\}}_{i=1}^{d_X}$ are non-constant and linearly independent, where $d_X$ is the dimension of $\Eta_X$. If the sufficient statistic $\V s_X$ is minimal, then each $\eprms_X$ determines a unique density $q_X$, and $\partial_{\eprms_X} \psi_X$ is invertible.

A $d_X$-dimensional exponential family $\mathcal M_X$ is thus a manifold of probability densities defined by a sufficient statistic $\V s_X$ and a base measure $\mu_X$. The densities in $\mathcal M_X$ can be identified by their mean parameters $\mprms_X$ or natural parameters $\eprms_X$, and we denote the space of all mean and natural parameters by $\Eta_X$ and $\Theta_X$ respectively. A minimal exponential family $\mathcal M_X$ is an exponential family with a minimal sufficient statistic $\V s_X$, and the parameter spaces $\Theta_X$ and $\Eta_X$ of a minimal exponential family are isomorphic. In this case, the transition functions between the two parameter spaces, known as the forward mapping $\V \tau_X \colon \Theta_X \to \Eta_X$ and the backward mapping $\V \tau_X\inv \colon \Eta_X \to \Theta_X$, are given by the gradient $\V \tau_X(\eprms_X) = \partial_{\eprms_X}\psi_X(\eprms_X)$ and its inverse, respectively.

\subsection{Exponential Family Harmoniums}\label{sec:harmoniums}

An exponential family harmonium is a kind of product exponential family which includes various LVMs as special cases~\cite{smolensky_information_1986,welling_exponential_2005}. Given two exponential families $\mathcal M_X$ and $\mathcal M_Z$ that model the distributions of $X$ and $Z$, respectively, a harmonium $\mathcal H_{XZ}$ is an exponential family that models the joint distribution of $X$ and $Z$. Where $\mathcal M_X$ and $\mathcal M_Z$ have base measures $\mu_X$ and $\mu_Z$, and sufficient statistics $\V s_X$ and $\V s_Z$, respectively, the base measure of $\mathcal H_{XZ}$ is $\mu_{XZ} = \mu_X \cdot \mu_Z$, and its sufficient statistic is given by $\V s_{XZ}(x, z) = (\V s_X(x), \V s_Z(z), \V s_X(x) \otimes \V s_Z(z))$, where $\otimes$ is the outer product operator. In words, $\V s_{XZ}$ is the concatenation of all the component functions of $\V s_X$, $\V s_Z$, and $\V s_X \otimes \V s_Z$. We also note that $\mathcal H_{XZ}$ is minimal if both $\mathcal M_X$ and $\mathcal M_Z$ are minimal. More intuitively, $\mathcal H_{XZ}$ is the exponential family that comprises all densities $q_{XZ}$ with the form
\begin{multline}\label{eq:harmonium-density}
	\log q_{XZ}(x,z) = \V s_X(x) \cdot \eprms_X + \V s_Z(z) \cdot \eprms_Z + \\ \V s_X(x) \cdot \iprms_{XZ} \cdot \V s_Z(z) - \psi_{XZ}(\eprms_X, \eprms_Z, \iprms_{XZ}),
\end{multline}
where $\eprms_{XZ} = (\eprms_X, \eprms_Z, \iprms_{XZ})$ are the natural parameters of $q_{XZ}$, and $\psi_{XZ}$ is the log-partition function of $\mathcal H_{XZ}$. We refer to the parameters $\eprms_X$ and $\eprms_Z$ as biases, and $\iprms_{XZ}$ as interactions.

Although harmoniums can certainly model the joint distribution of two observable variables, we focus on the setting where one of the variables is latent, and so it will prove helpful to begin developing the language of LVMs. From here on out we will assume that the variables $X$ and $Z$ denote observable and latent random variables, respectively, and we refer to e.g.\ $\V s_X$ and $\V s_Z$ as the observable and latent sufficient statistics, and $\eprms_X$ and $\eprms_Z$ as the observable and latent biases, respectively.

Harmonium densities have a simple log-linear structure, but their marginal densities do not. In particular, for $q_{XZ} \in \mathcal H_{XZ}$ with natural parameters $\eprms_X$, $\eprms_Z$, and $\iprms_{XZ}$, it is easy to show that the observable density is given by
\begin{multline}\label{eq:harmonium-data-distribution}
	\log q_X(x) =  \V s_X(x) \cdot \eprms_X + \\ \psi_Z(\eprms_Z + \V s_X(x) \cdot \iprms_{XZ}) - \psi_{XZ}(\eprms_X, \eprms_Z, \iprms_{XZ}),
\end{multline}
and similarly the prior is given by
\begin{multline}\label{eq:harmonium-prior}
	\log q_Z(z) = \V s_Z(z) \cdot \eprms_Z + \\ \psi_X(\eprms_X + \iprms_{XZ} \cdot \V s_Z(z)) - \psi_{XZ}(\eprms_X, \eprms_Z, \iprms_{XZ}).
\end{multline}

In contrast, the conditional densities of a harmonium do inherit a linear structure, and have the form of generalized linear models~\cite{bishop_pattern_2006,yang_graphical_2012}. In particular, where the likelihood and posterior are defined by $q_{X \mid Z} = \frac{q_{XZ}}{q_Z}$ and $q_{Z \mid X} = \frac{q_{XZ}}{q_X}$, respectively, we may combine Equations~\ref{eq:harmonium-density},~\ref{eq:harmonium-data-distribution}, and~\ref{eq:harmonium-prior}, to conclude that
\begin{multline}\label{eq:likelihood-density}
	\log q_{X \mid Z}(x \mid z) = \\ \V s_X(x) \cdot (\eprms_X + \iprms_{XZ} \cdot \V s_Z(z)) - \psi_X(\eprms_X + \iprms_{XZ} \cdot \V s_Z(z)),
\end{multline}
and similarly that
\begin{multline}\label{eq:posterior-density}
	\log q_{Z \mid X}(z \mid x) = \\ \V s_Z(z) \cdot (\eprms_Z + \V s_X(x) \cdot \iprms_{XZ}) - \psi_Z(\eprms_Z + \V s_X(x) \cdot \iprms_{XZ}).
\end{multline}
When discussing conditional densities we denote e.g.\ the likelihood $q_{X \mid Z}$ at $z$ by $q_{X \mid Z = z}$, so that $q_{X \mid Z = z} \in \mathcal M_X$ is the exponential family density with natural parameters $\eprms_{X \mid Z}(z) = \eprms_X + \iprms_{XZ} \cdot \V s_Z(z)$. We also write e.g.\ $q_{X \mid Z} \in \mathcal M_X$ as shorthand for $q_{X \mid Z = z} \in \mathcal M_X, \forall z \in \mathcal Z$, to express that $q_{X \mid Z}$ is always a member of $\mathcal M_X$ for any $z$.

We defined harmoniums constructively as a product of component exponential families, but we may also define them intrinsically, as the most general families of densities with likelihoods and posteriors in pre-specified exponential families.

\begin{theorem}\label{thm:harmonium-specification}
	Suppose that $\mathcal H_{XZ}$ is the harmonium defined by the minimal exponential families $\mathcal M_X$ and $\mathcal M_Z$, and that $q_{XZ}$ is an arbitrary joint density over the sample space $\mathcal X \times \mathcal Z$ with respect to the product measure $\mu_{XZ} = \mu_X \cdot \mu_Z$. Then $q_{X \mid Z} \in \mathcal M_X$ and $q_{Z \mid X} \in \mathcal M_Z$ if and only if $q_{XZ} \in \mathcal H_{XZ}$.
	\begin{proof}
		See~\cite{arnold_conditionally_2001}, Theorem 3.

	\end{proof}
\end{theorem}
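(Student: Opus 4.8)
The plan is to prove the two directions separately, with the converse carrying essentially all of the content. The forward direction is immediate from the structure already derived in the text: if $q_{XZ} \in \mathcal H_{XZ}$, then Equations~\ref{eq:likelihood-density} and~\ref{eq:posterior-density} exhibit $q_{X \mid Z = z}$ and $q_{Z \mid X = x}$ as exponential family densities with natural parameters $\eprms_{X \mid Z}(z) = \eprms_X + \iprms_{XZ} \cdot \V s_Z(z)$ and $\eprms_{Z \mid X}(x) = \eprms_Z + \V s_X(x) \cdot \iprms_{XZ}$, so $q_{X\mid Z} \in \mathcal M_X$ and $q_{Z \mid X} \in \mathcal M_Z$ at once.

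For the converse, assume $q_{X \mid Z = z} \in \mathcal M_X$ and $q_{Z \mid X = x} \in \mathcal M_Z$ for all $x, z$, so that there are natural-parameter functions $\eprms_{X \mid Z}(z)$ and $\eprms_{Z\mid X}(x)$ with $\log q_{X \mid Z = z}(x) = \V s_X(x)\cdot\eprms_{X\mid Z}(z) - \psi_X(\eprms_{X\mid Z}(z))$ and the analogous expression for the posterior. The main device is the mixed second difference of the log-joint about a reference point $(x_0, z_0)$,
\[
	\Delta(x,z) = \log q_{XZ}(x,z) - \log q_{XZ}(x, z_0) - \log q_{XZ}(x_0, z) + \log q_{XZ}(x_0, z_0).
\]
Writing $\log q_{XZ} = \log q_{X \mid Z} + \log q_Z = \log q_{Z \mid X} + \log q_X$, every term depending on $x$ or $z$ alone (the marginals and the $\psi_X$, $\psi_Z$ terms) cancels, leaving two representations, $\Delta(x,z) = (\V s_X(x) - \V s_X(x_0)) \cdot (\eprms_{X \mid Z}(z) - \eprms_{X \mid Z}(z_0))$ and $\Delta(x,z) = (\V s_Z(z) - \V s_Z(z_0)) \cdot (\eprms_{Z \mid X}(x) - \eprms_{Z\mid X}(x_0))$. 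Equating them yields a single functional equation coupling the two conditional parameter maps to the sufficient statistics.

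The crux of the argument — and the step I expect to be the main obstacle — is to extract from this functional equation that $\eprms_{X\mid Z}(z) - \eprms_{X\mid Z}(z_0)$ is a \emph{fixed} linear image of $\V s_Z(z) - \V s_Z(z_0)$. Here minimality is essential: because $\{1, s_{Z,1}, \dots, s_{Z,d_Z}\}$ are linearly independent, the affine span of the image of $\V s_Z$ is all of $\mathbb R^{d_Z}$, so I can select points $z^{(1)}, \dots, z^{(d_Z)}$ at which the differences $\V s_Z(z^{(k)}) - \V s_Z(z_0)$ form a basis. Evaluating the functional equation at these points and inverting the resulting linear system expresses $\eprms_{Z \mid X}(x) - \eprms_{Z\mid X}(x_0)$ as a constant linear image of $\V s_X(x) - \V s_X(x_0)$; substituting back and invoking minimality of $\V s_X$ to match coefficients forces $\eprms_{X\mid Z}(z) = \eprms_X + \iprms_{XZ}\cdot \V s_Z(z)$ for a constant bias $\eprms_X$ and constant interaction $\iprms_{XZ}$, and symmetrically $\eprms_{Z\mid X}(x) = \eprms_Z + \V s_X(x) \cdot \iprms_{XZ}$. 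The care needed to justify the spanning points and to keep the linear-independence bookkeeping clean is where the real work lies.

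Finally I would reconstruct the joint. Substituting the affine form of $\eprms_{X\mid Z}$ into $\log q_{XZ} = \log q_{X \mid Z} + \log q_Z$ produces $\V s_X(x)\cdot\eprms_X + \V s_X(x)\cdot\iprms_{XZ}\cdot\V s_Z(z)$ plus a residual $r(z)$ depending only on $z$; repeating through the posterior produces the same bilinear coupling (the cross terms coincide by uniqueness of the bilinear form $\Delta$ under minimality) plus a residual $t(x)$ depending only on $x$. Equating the two expansions gives $r(z) - \V s_Z(z)\cdot\eprms_Z = t(x) - \V s_X(x)\cdot\eprms_X$, which separates the variables and so must equal a constant $c$. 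Hence $\log q_{XZ}(x,z) = \V s_X(x)\cdot\eprms_X + \V s_Z(z)\cdot\eprms_Z + \V s_X(x)\cdot\iprms_{XZ}\cdot\V s_Z(z) + c$, which is exactly Equation~\ref{eq:harmonium-density} with $c = -\psi_{XZ}(\eprms_X,\eprms_Z,\iprms_{XZ})$ fixed by normalization, so $q_{XZ}\in\mathcal H_{XZ}$.
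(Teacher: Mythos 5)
Your proof is correct. The paper itself does not argue this theorem --- it defers entirely to Arnold, Castillo and Sarabia (2001, Theorem~3), whose treatment of conditionally specified distributions rests on classical functional-equation machinery (the Stephanos--Levi-Civita--Sut\H{o} theorem for equations of the form $\sum_i f_i(x)g_i(z) = \sum_j u_j(x)v_j(z)$). Your mixed-second-difference device $\Delta(x,z)$ is an elementary, self-contained substitute: the cancellation of the marginal and log-partition terms is exactly right, the two resulting bilinear representations yield the needed functional equation, and your use of minimality to select points $z^{(1)},\dots,z^{(d_Z)}$ whose sufficient-statistic differences form a basis (and dually for $x$) correctly forces both conditional natural-parameter maps to be affine in the sufficient statistics with a shared interaction matrix, after which the separation-of-variables step recovers Equation~\ref{eq:harmonium-density} with the constant fixed by normalization. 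What your route buys is transparency about where minimality enters (once for each family) and independence from outside functional-equation theory. What it quietly assumes --- as does the cited result --- is a positivity/support condition: $q_{XZ}$ must be strictly positive on a product neighbourhood of the reference point $(x_0,z_0)$ and the evaluation points so that every logarithm and every conditional density appearing in $\Delta$ is well defined. This is essentially implicit in the hypothesis $q_{X\mid Z=z}\in\mathcal M_X$ and $q_{Z\mid X=x}\in\mathcal M_Z$ for all $z$ and $x$ (exponential family densities are positive $\mu$-a.e.), but it deserves a sentence, since the theorem as phrased allows $q_{XZ}$ to be an arbitrary joint density. Normalizability of the reconstructed exponential form is automatic because $q_{XZ}$ integrates to one, so the recovered parameters lie in the natural parameter domain of $\mathcal H_{XZ}$.
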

The leftward implication $\impliedby$ of this theorem is a trivial consequence of Equations~\ref{eq:likelihood-density} and~\ref{eq:posterior-density}, but the rightward implication $\implies$ is rather profound. In this direction we essentially suppose that $q_{XZ}$ has a likelihood and a posterior with forms $q_{X \mid Z}(x \mid z) \propto e^{\V s_X(x) \cdot \eprms_{X \mid Z}(z)}$ and $q_{Z \mid X}(z \mid x) \propto e^{\V s_Z(z) \cdot \eprms_{Z \mid X}(x)}$ for arbitrary functions $\eprms_{X \mid Z} \colon \mathcal Z \to \Theta_X$ and $\eprms_{Z \mid X} \colon \mathcal X \to \Theta_Z$, respectively. According to Theorem~\ref{thm:harmonium-specification}, the mere assumption that $q_{XZ}$ exists is enough to ensure that $\eprms_{X \mid Z}$ and $\eprms_{Z \mid X}$ must have the linear expressions $\eprms_{X \mid Z}(z) = \eprms_X + \iprms_{XZ} \cdot \V s_Z(z)$ and $\eprms_{Z \mid X}(x) = \eprms_Z + \iprms_{XZ} \cdot \V s_X(x)$, respectively.

When developing an LVM for a particular statistical problem, there is often an intuitive choice of exponential family structure for the posterior and likelihood. For example, if we wish to cluster count data, it is natural to model the likelihood as Poisson distributed, and the posterior as categorically distributed. Based on Theorem~\ref{thm:harmonium-specification}, this simple set of assumptions is sufficient to ensure that the proposed LVM is an exponential family harmonium.

More recent work has shown how to generalize the intrinsic characterization of harmoniums to Markov Random Fields (MRFs), which model multiple random variables with dependencies represented by a graph~\cite{yang_graphical_2015,tansey_vector-space_2015}. Although a full treatment of this extended theory is beyond the scope of this paper, the theory we develop may nevertheless be applied to such MRFs by converting the given MRF into a harmonium. In particular, if we model the random variables $X_1, \dots X_n$ as an MRF where each $X_i$ is in some chosen exponential family $\mathcal M_{X_i}$, the resulting model will be a form of higher-order harmonium, parameterized by a collection of tensors that describe the interactions between every clique of the MRF\@. This collection of tensors may be flattened into a pair of biases and an interaction matrix $\iprms_{XZ}$ between an arbitrary partition of the nodes $X = (X_{i_1}, \dots, x_{i_m})$ and $Z = (X_{j_1}, \dots, x_{j_l})$, such that the interaction $\theta_{{X_i}{X_j}}$ is zero if $X_i$ and $X_j$ are not adjacent.

\section{Conjugation for Latent Variable Models}

The likelihood $q_{X \mid Z}$ and posterior $q_{Z \mid X}$ of a harmonium density $q_{XZ}$ have simple linear structures, and are always in the exponential families $\mathcal M_X$ and $\mathcal M_Z$, respectively. In general, however, the observable density $q_X$ and prior $q_Z$ are not members of $\mathcal M_X$ and $\mathcal M_Z$, respectively. This is both a blessing and a curse, as on one hand, this allows the set of all observable densities to represent more complex distributions than the simpler set $\mathcal M_X$. On the other hand, because the prior may not be computationally tractable, various computations with harmoniums, such as sampling, learning, and inference, may also prove intractable.

\subsection{Conjugated Harmoniums}

Ideally, the set of observable densities would be more complex than $\mathcal M_X$ to ensure maximum representational power, while the priors would remain in $\mathcal M_Z$ to facilitate tractability. Perhaps surprisingly, some classes of harmoniums do indeed have this structure. In general, a prior and posterior are said to be conjugate if they have the same form. In the context of a harmonium density $q_{XZ}$, since the harmonium posterior $q_{Z \mid X} \in \mathcal M_Z$ by construction, the harmonium prior and posterior are conjugate if $q_Z \in \mathcal M_Z$.

\begin{definition}[Conjugated Harmonium]
	Where $\mathcal H_{XZ}$ is a harmonium defined by $\mathcal M_X$ and $\mathcal M_Z$, a harmonium density $q_{XZ} \in \mathcal H_{XZ}$ is conjugated if $q_Z \in \mathcal M_Z$, and the harmonium $\mathcal H_{XZ}$ is conjugated if every $q_{XZ} \in \mathcal H_{XZ}$ is conjugated.
\end{definition}

Trivially, any $q_{XZ} \in \mathcal H_{XZ}$ with parameters $\eprms_X$, $\eprms_Z$, and $\iprms_{XZ}$ is conjugated if the interactions $\iprms_{XZ} = \V 0$. This follows directly from Equation~\ref{eq:harmonium-prior}, and corresponds to the case where $q_X$ and $q_Z$ are independent --- unfortunately, by the same logic, $q_X \in \mathcal M_X$ and so such a model offers no additional representational power. With the following lemma --- the Conjugation Lemma --- we present a necessary and sufficient constraint on the parameters of $q_{XZ}$ that ensures conjugation, yet which allows for $q_X$ not to be in $\mathcal M_X$.

\begin{lemmaE}[The Conjugation Lemma]\label{thm:conjugation-lemma}
	Suppose that $\mathcal H_{XZ}$ is a harmonium defined by the exponential families $\mathcal M_X$ and $\mathcal M_Z$, and that $q_{XZ} \in \mathcal H_{XZ}$ has parameters $(\eprms_X, \eprms_Z, \iprms_{XZ})$. Then $q_{XZ} \in \mathcal H_{XZ}$ is conjugated if and only if there exists a vector $\rprms$ and a scalar $\chi$ such that
	\begin{equation}\label{eq:conjugation-equation}
		\psi_X(\eprms_X + \iprms_{XZ} \cdot \V s_Z(z)) = \V s_Z(z) \cdot \rprms + \chi,
	\end{equation}
	for any $z \in \mathcal Z$.
\end{lemmaE}
\begin{proofE}
	On one hand, if we assume that $q_Z \in \mathcal M_Z$ with parameters $\eprms^*_Z$, then
	\begin{align*}
		q_Z(z) \propto\  & e^{\eprms_Z \cdot \V s_Z(x) + \psi_X(\eprms_X + \iprms_{XZ} \cdot \V s_Z(z))} \propto e^{\eprms^*_Z \cdot \V s_Z(z)} &  & \\
		\implies         & \eprms_Z \cdot \V s_Z(z) + \psi_X(\eprms_X + \iprms_{XZ} \cdot \V s_Z(z))                                                 \\
		                 & = \eprms^*_Z \cdot \V s_Z(z) + \chi                                                                                       \\
		\implies         & \psi_X(\eprms_X + \iprms_{XZ} \cdot \V s_Z(z)) = \V s_Z(z) \cdot \rprms + \chi.
	\end{align*}
	for some $\chi$, and $\rprms = \eprms^*_Z - \eprms_Z$.

	On the other hand, if we first assume that Eq.~\ref{eq:conjugation-equation} holds, then $q_Z$ is given by
	\begin{align*}
		q_Z(z) & \propto e^{\eprms_Z \cdot \V s_Z(z) + \psi_X(\eprms_X + \iprms_{XZ} \cdot \V s_Z(z))} \\
		       & \propto e^{(\eprms_Z + \rprms) \cdot \V s_Z(z)},
	\end{align*}
	which implies that $q_Z \in \mathcal M_Z$ with parameters $\eprms_Z + \rprms$.
\end{proofE}

Most of the computations in our theory involve operations on the parameters $\rprms$ and $\chi$, and we refer to them as the conjugation parameters. We introduce two of these simpler computations with the following corollaries. The first states that marginalizing the observable variables out of a conjugated harmonium density reduces to vector addition.

\begin{corollary}\label{thm:conjugated-prior}
	Suppose that $\mathcal H_{XZ}$ is a harmonium defined by the exponential families $\mathcal M_X$ and $\mathcal M_Z$, and that $q_{XZ} \in \mathcal H_{XZ}$ with parameters $(\eprms_X, \eprms_Z, \iprms_{XZ})$ is conjugated with conjugation parameters $\rprms$ and $\chi$. Then the parameters $\eprms^*_Z$ of $q_Z \in \mathcal M_Z$ are given by
	\begin{equation}\label{eq:latent-parameters}
		\eprms^*_Z = \eprms_Z + \rprms.
	\end{equation}
\end{corollary}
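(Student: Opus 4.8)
The plan is to derive the result directly from the harmonium prior expression in Equation~\ref{eq:harmonium-prior} together with the conjugation equation (Equation~\ref{eq:conjugation-equation}) furnished by the Conjugation Lemma. The whole argument amounts to a substitution followed by an identification of natural parameters, so I expect it to be short and essentially mechanical, being little more than a repackaging of the ``other hand'' direction already appearing in the proof of Lemma~\ref{thm:conjugation-lemma}.

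First I would recall that, for the conjugated density $q_{XZ}$, Equation~\ref{eq:harmonium-prior} gives
\[
	\log q_Z(z) = \V s_Z(z) \cdot \eprms_Z + \psi_X(\eprms_X + \iprms_{XZ} \cdot \V s_Z(z)) - \psi_{XZ}(\eprms_X, \eprms_Z, \iprms_{XZ}).
\]
Since $q_{XZ}$ is conjugated with conjugation parameters $\rprms$ and $\chi$, Lemma~\ref{thm:conjugation-lemma} lets me replace the term $\psi_X(\eprms_X + \iprms_{XZ} \cdot \V s_Z(z))$ by $\V s_Z(z) \cdot \rprms + \chi$, valid for every $z \in \mathcal Z$. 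Substituting and collecting the terms linear in $\V s_Z(z)$ yields
\[
	\log q_Z(z) = \V s_Z(z) \cdot (\eprms_Z + \rprms) + \bigl(\chi - \psi_{XZ}(\eprms_X, \eprms_Z, \iprms_{XZ})\bigr),
\]
where the parenthesized quantity is a constant independent of $z$. Comparing this with the defining exponential-family form $\log q_Z(z) = \V s_Z(z) \cdot \eprms^*_Z - \psi_Z(\eprms^*_Z)$ of Equation~\ref{eq:exponential-family-density}, I match the coefficient of the sufficient statistic $\V s_Z(z)$ to read off $\eprms^*_Z = \eprms_Z + \rprms$; the leftover constants must then agree by normalization and need not be tracked for the claim.

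The only subtlety, and the single place where the minimality hypothesis on $\mathcal M_Z$ is used, is in asserting that this identification is unambiguous. Because $\V s_Z$ is minimal, each density in $\mathcal M_Z$ corresponds to a unique vector of natural parameters, so reading off $\eprms_Z + \rprms$ as the coefficient of $\V s_Z(z)$ genuinely pins down $\eprms^*_Z$ rather than merely one of several equivalent parameterizations. This is the main (indeed the only) obstacle, and it is dispatched immediately by the minimality already built into the harmonium construction.
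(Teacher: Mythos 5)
Your proof is correct and follows the same route as the paper, which simply cites the second half of the proof of the Conjugation Lemma: substitute Equation~\ref{eq:conjugation-equation} into the prior expression of Equation~\ref{eq:harmonium-prior} and read off the natural parameters $\eprms_Z + \rprms$. Your added remark about minimality guaranteeing uniqueness of the identification is a reasonable point of extra care that the paper leaves implicit, but it does not change the argument.
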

\begin{proof}
	This follows from the second part of the proof of Lemma~\ref{thm:conjugation-lemma}.
\end{proof}

Assuming we can sample from densities in $\mathcal M_X$ and $\mathcal M_Z$, Corollary~\ref{thm:conjugated-prior} also shows that we may sample any conjugated $q_{XZ} \in \mathcal H_{XZ}$ by first sampling from $q_Z \in \mathcal M_Z$, and then $q_{X \mid Z} \in \mathcal M_X$. Our next corollary states that the log-partition of a conjugated harmonium density may also be tractably evaluated.

\begin{corollaryE}[]\label{thm:conjugated-log-partition}
	Suppose that $\mathcal H_{XZ}$ is a harmonium defined by the exponential families $\mathcal M_X$ and $\mathcal M_Z$, that $q_{XZ} \in \mathcal H_{XZ}$ with parameters $(\eprms_X, \eprms_Z, \iprms_{XZ})$ is conjugated with conjugation parameters $\rprms$ and $\chi$, and that $q_Z \in \mathcal M_Z$ has parameters $\eprms^*_Z = \eprms_Z + \rprms$. Then the log-partition function $\psi_{XZ}$ is given by
	\begin{equation}\label{eq:conjugated-log-partition}
		\psi_{XZ}(\eprms_X, \eprms_Z, \iprms_{XZ}) = \psi_Z(\eprms^*_Z) + \chi,
	\end{equation}
	where $\psi_Z$ is the log-partition function of $\mathcal M_Z$.
\end{corollaryE}
\begin{proofE}
	Suppose $q_{XZ} \in \mathcal H_{XZ}$ is conjugated with conjugation parameters $\rprms$ and $\chi$. Then by substituting Equation~\ref{eq:conjugation-equation} into Equation~\ref{eq:harmonium-prior}
	\begin{equation*}
		q_Z(z) = e^{\eprms_Z \cdot \V s_Z(z) + \rprms \cdot \V s_Z(z) + \chi - \psi_{XZ}(\eprms_X, \eprms_Z, \iprms_{XZ})},
	\end{equation*}
	and by Corollary~\ref{thm:conjugated-prior} we know that $q_Z(z) = e^{\eprms^*_Z \cdot \V s_Z(z) - \psi_Z(\eprms^*_Z)}$, where $\eprms^*_Z = \eprms_Z + \rprms$. Therefore
	\begin{align*}
		q_Z(z)             & = e^{\eprms^*_Z \cdot \V s_Z(z) - \psi_Z(\eprms^*_Z)}                                                       \\
		                   & = e^{\eprms_Z \cdot \V s_Z(z) + \rprms \cdot \V s_Z(z) + \chi - \psi_{XZ}(\eprms_X, \eprms_Z, \iprms_{XZ})} \\
		\iff \hspace{12pt} & \psi_{XZ}(\eprms_X, \eprms_Z, \iprms_{XZ})                                                                  \\
		                   & = (\eprms_Z + \rprms - \eprms^*_Z) \cdot \V s_Z(z) + \psi_Z(\eprms^*_Z) + \chi                              \\
		                   & =\psi_Z(\eprms^*_Z) + \chi.
	\end{align*}
\end{proofE}

The log-partition function $\psi_Z$ of $\mathcal M_Z$ will often be tractable, which means that various computations that rely on $\psi_{XZ}$, such as evaluating the observable density $q_X$ (Eq.~\ref{eq:harmonium-data-distribution}), are also tractable. In particular, given the conjugated harmonium density $q_{XZ} \in \mathcal H_{XZ}$ with natural parameters $\eprms_X$, $\eprms_Z$, and $\iprms_{XZ}$, and conjugation parameters $\rprms$ and $\chi$, we may combine Equations~\ref{eq:harmonium-data-distribution} and~\ref{eq:conjugated-log-partition} to conclude that
\begin{multline}\label{eq:conjugated-data-distribution}
	\log q_X(x) = \V s_X(x) \cdot \eprms_X
	\\ + \psi_Z(\eprms_Z + \V s_X(x) \cdot \iprms_{XZ}) - \psi_{Z}(\eprms_Z + \rprms) - \chi.
\end{multline}
We may thus use Equation~\ref{eq:conjugated-log-partition} to finesse the problem we laid out at the beginning of the section: on one hand, the observable density $q_X$ of a conjugated harmonium density (Eq.~\ref{eq:conjugated-data-distribution}) is not generally in $\mathcal M_X$, and yet on the other hand it is computable up to the computability of $\psi_Z$.

\subsection{Conjugate Prior Families}

The left hand side of Equation~\ref{eq:conjugation-equation} is simply the log-partition function of the likelihood $q_{X \mid Z}$ (Eq.~\ref{eq:likelihood-density}) of a harmonium density $q_{XZ}$. Indeed, we may generalize the Conjugation Lemma by first considering an arbitrary exponential family likelihood function $f_{X \mid Z} \colon \mathcal Z \to \mathcal M_X$ that does not presuppose a probabilistic structure over $\mathcal Z$. We then choose a family of priors $\mathcal M_Z$, so that given a particular prior $q_Z \in \mathcal M_Z$, we define the posterior $q_{Z \mid X}$ using Bayes' rule as
\begin{equation}\label{eq:bayes-rule}
	q_{Z \mid X}(z \mid x) \propto f_{X \mid Z}(x \mid z)q_Z(z).
\end{equation}
From this perspective we may consider families of conjugate priors for a fixed likelihood.

\begin{definition}[Conjugate Prior Family]
	The exponential family $\mathcal M_Z$ is a conjugate prior family for the likelihood $f_{X \mid Z} \colon \mathcal Z \to \mathcal M_X$ if the posterior $q_{Z \mid X} \in \mathcal M_Z$ for any prior $q_Z \in \mathcal M_Z$, where the posterior is defined by Bayes' rule (Eq.~\ref{eq:bayes-rule}).
\end{definition}

We may then generalize the conditions of the Conjugation Lemma to arrive at what we refer to as the Conjugation Theorem.

\begin{theoremE}[The Conjugation Theorem]\label{thm:conjugation-theorem}

	Suppose that $\mathcal M_X$ and $\mathcal M_Z$ are exponential families with minimal sufficient statistics $\V s_X$ and $\V s_Z$, respectively, and that $f_{X \mid Z} \colon \mathcal Z \to \mathcal M_X$, where $\mathcal Z$ is the sample space of $\mathcal M_Z$. Then $\mathcal M_Z$ is a conjugate prior family for the likelihood $f_{X \mid Z}$ if and only if $f_{X \mid Z}$ has the form

	\begin{equation}\label{eq:conjugating-likelihood}
		f_{X \mid Z}(x \mid z) = e^{\V s_X(x) \cdot (\eprms_X + \iprms_{XZ} \cdot \V s_Z(z)) - \V s_Z(z) \cdot \rprms - \chi},
	\end{equation}
	for some parameters $\eprms_X$, $\iprms_{XZ}$, $\chi$, and $\rprms$.

\end{theoremE}
\begin{proofE}
	For $\implies$, suppose $\mathcal M_Z$ is a conjugate prior family for $f_{X \mid Z}$. Where $q_{XZ} = f_{X \mid Z} \cdot q_Z$ for some $q_Z \in \mathcal M_Z$, the likelihood of $q_{XZ}$ is $f_{X \mid Z}$, and its posterior $q_{Z \mid X}$ is given by Bayes' rule (Eq.~\ref{eq:bayes-rule}). Since $f_{X \mid Z} \in \mathcal M_X$ and $q_{Z \mid X} \in \mathcal M_Z$ by assumption, Theorem~\ref{thm:harmonium-specification} implies that $q_{XZ}$ is in the harmonium $\mathcal H_{XZ}$ defined by $\mathcal M_X$ and $\mathcal M_Z$, which implies that $f_{X \mid Z}$ has the form of Equation~\ref{eq:likelihood-density}. Finally, since $q_Z \in \mathcal M_Z$, Equation~\ref{eq:conjugation-equation} holds according to the Conjugation Lemma~(Lm.~\ref{thm:conjugation-lemma}), and Equation~\ref{eq:conjugating-likelihood} follows directly by combining Equations~\ref{eq:likelihood-density} and~\ref{eq:conjugation-equation}.

	For $\impliedby$, if Equation~\ref{eq:conjugating-likelihood} holds, then for any $q_Z \in \mathcal M_Z$ with parameters $\eprms_Z^*$,
	\begin{align*}
		q_{Z \mid X} & (z \mid x)                                                                                                                                 \\ & \propto f_{X \mid Z}(x \mid z)q_Z(z)                                                                                                       \\
		             & = e^{\V s_X(x) \cdot \eprms_X + \V s_X(x) \cdot \iprms_{XZ} \cdot \V s_Z(z) - \V s_Z(z) \cdot \rprms - \chi}e^{\V s_Z(z) \cdot \eprms^*_Z} \\
		             & \propto e^{\V s_Z(z) \cdot (\eprms^*_Z - \rprms + \V s_X(x) \cdot \iprms_{XZ})},
	\end{align*}
	which implies $q_{Z \mid X} \in \mathcal M_Z$.
\end{proofE}

For the Conjugation Lemma we assumed that the joint density $q_{XZ}$ in question had the exponential family structure of a harmonium density. For the Conjugation Theorem, on the other hand, we merely assume that the likelihood in question $f_{X \mid Z}$ is exponential family distributed. From this it follows that $f_{X \mid Z}$ must have the form of Equation~\ref{eq:conjugating-likelihood} to support exponential family conjugate priors. This generalizes the classical results for the Bayesian estimation of (latent) exponential family parameters~\cite{diaconis_conjugate_1979,arnold_conjugate_1993} to the case where learnable parameters $\eprms_X$ and $\iprms_{XZ}$ and conjugation parameters $\V \rho$ shape how information in the observations is integrated into the posterior.

We may intuitively interpret Bayes' rule as a function defined by the likelihood, where the prior and observation are the input, and the posterior is the output. When the given likelihood satisfies Equation~\ref{eq:conjugating-likelihood}, then the posterior is simply a linear function of the natural and conjugation parameters of the likelihood, and the sufficient statistic of the observation.

\begin{corollary}\label{thm:conjugating-posterior}
	Suppose that $f_{X \mid Z}$ is defined by Equation~\ref{eq:conjugating-likelihood} with natural parameters $\eprms_X$ and $\iprms_{XZ}$ and conjugation parameters $\rprms$ and $\chi$, that $\mathcal M_Z$ is a conjugate prior family for $f_{X \mid Z}$, and that the prior $q_Z \in \mathcal M_Z$ has parameters $\eprms^*_Z$. Then the posterior $q_{Z \mid X = x} \in \mathcal M_Z$ has parameters
	\begin{equation}\label{eq:conjugated-posterior-parameters}
		\eprms_{Z \mid X}(x) = \eprms^*_Z + \V s_X(x) \cdot \iprms_{XZ} - \rprms.
	\end{equation}
	\begin{proof}
		See the proof of Theorem~\ref{thm:conjugation-theorem}.
	\end{proof}
\end{corollary}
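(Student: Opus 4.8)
The plan is to read the posterior parameters off directly from the backward ($\impliedby$) direction of the Conjugation Theorem (Thm.~\ref{thm:conjugation-theorem}), where the posterior was already shown to be proportional to an exponential in $\V s_Z(z)$. First I would substitute the conjugating likelihood (Eq.~\ref{eq:conjugating-likelihood}) together with the prior density $q_Z(z) = e^{\V s_Z(z) \cdot \eprms^*_Z - \psi_Z(\eprms^*_Z)}$ into Bayes' rule (Eq.~\ref{eq:bayes-rule}), obtaining
\begin{align*}
	q_{Z \mid X}(z \mid x) & \propto f_{X \mid Z}(x \mid z)\, q_Z(z) \\
	                       & \propto e^{\V s_X(x) \cdot \iprms_{XZ} \cdot \V s_Z(z) - \V s_Z(z) \cdot \rprms + \V s_Z(z) \cdot \eprms^*_Z},
\end{align*}
where every factor that does not depend on $z$---namely $\V s_X(x) \cdot \eprms_X$, the constant $-\chi$, and the prior log-normalizer $-\psi_Z(\eprms^*_Z)$---has been absorbed into the proportionality constant.

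The second step is to recollect the three remaining $z$-dependent terms into a single inner product with $\V s_Z(z)$, using that $\V s_X(x) \cdot \iprms_{XZ} \cdot \V s_Z(z) = (\V s_X(x) \cdot \iprms_{XZ}) \cdot \V s_Z(z)$ with $\V s_X(x) \cdot \iprms_{XZ} \in \Theta_Z$. This gives
\begin{equation*}
	q_{Z \mid X}(z \mid x) \propto e^{\V s_Z(z) \cdot (\eprms^*_Z + \V s_X(x) \cdot \iprms_{XZ} - \rprms)},
\end{equation*}
which is the unnormalized form of a density in $\mathcal M_Z$ with candidate natural parameters $\eprms^*_Z + \V s_X(x) \cdot \iprms_{XZ} - \rprms$.

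The final and only delicate step is to argue that these candidate parameters are genuinely \emph{the} natural parameters of $q_{Z \mid X = x}$ and not merely one representation among several; this is exactly where the minimality of $\V s_Z$ enters. Since the Conjugation Theorem already guarantees $q_{Z \mid X = x} \in \mathcal M_Z$, we know it equals $e^{\V s_Z(z) \cdot \eprms_{Z \mid X}(x) - \psi_Z(\eprms_{Z \mid X}(x))}$ for some $\eprms_{Z \mid X}(x) \in \Theta_Z$. Equating this with the displayed proportional expression and taking logarithms, the difference $\V s_Z(z) \cdot (\eprms_{Z \mid X}(x) - \eprms^*_Z - \V s_X(x) \cdot \iprms_{XZ} + \rprms)$ must be constant in $z$; because the components of $\V s_Z$ are non-constant and linearly independent, this forces the coefficient vector to vanish, yielding Equation~\ref{eq:conjugated-posterior-parameters}. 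I expect no substantive obstacle here, as the hard content---that the posterior lies in $\mathcal M_Z$ at all---is supplied by Theorem~\ref{thm:conjugation-theorem}; the corollary only extracts the resulting parameters, with minimality supplying the uniqueness needed to read them off unambiguously.
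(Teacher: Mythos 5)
Your proposal is correct and follows essentially the same route as the paper: the corollary's proof simply points to the $\impliedby$ direction of the Conjugation Theorem, where the product $f_{X \mid Z}(x \mid z)\,q_Z(z)$ is expanded and the $z$-dependent terms are collected into $e^{\V s_Z(z) \cdot (\eprms^*_Z - \rprms + \V s_X(x) \cdot \iprms_{XZ})}$, exactly as you do. Your closing remark on minimality ensuring the parameters are uniquely determined is a sound clarification that the paper leaves implicit.
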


If the prior is conjugate to the posterior, then Bayes' rule can be reapplied using the posterior as a new input, and this pattern will allow us to develop a simple recursive algorithm for Bayesian inference (Sec.~\ref{sec:conjugation-independent-observations}). Our last corollary for this section details exactly how the parameters of a conjugated harmonium relate to those of a likelihood that supports a conjugate prior family.

\begin{corollaryE}
	Let $\mathcal H_{XZ}$ be the harmonium defined by $\mathcal M_X$ and $\mathcal M_Z$. Then $q_{XZ} \in \mathcal H_{XZ}$ is a conjugated harmonium density with natural parameters $\eprms_X$, $\eprms_Z$, and $\iprms_{XZ}$ and conjugation parameters $\rprms$ and $\chi$, if and only if $q_{X \mid Z}$ satisfies Equation~\ref{eq:conjugating-likelihood} with natural parameters $\eprms_X$ and $\iprms_{XZ}$ and conjugation parameters $\rprms$ and $\chi$, and the prior $q_Z \in \mathcal M_Z$ has parameters $\eprms_Z^* = \eprms_Z + \rprms$.
\end{corollaryE}
\begin{proofE}
	For $\implies$ we apply Equation~\ref{eq:conjugation-equation} to the harmonium likelihood (Eq.~\ref{eq:likelihood-density}).

	For $\impliedby$ we multiply the definitions of the likelihood and prior and see that
	\begin{align*}
		 & q_{X \mid Z}(x \mid z) \cdot p(z) \\ & \propto e^{\V s_X(x) \cdot \eprms_X + \V s_X(x) \cdot \iprms_{XZ} \cdot \V s_Z(z) - \V s_Z(z) \cdot \rprms + \V s_Z(z) \cdot (\eprms_Z + \rprms)} \\ & \propto q_{XZ}(x,z).
	\end{align*}
\end{proofE}

\section{Example Conjugated Harmoniums}

In this section we review a variety of well-known, tractable LVMs and derive the conditions under which they are conjugated. We rely on the intrinsic characterization of harmoniums provided by Theorem~\ref{thm:harmonium-specification} to derive the exponential family form of the models we consider. In Appendix~\ref{app:parameter-transformations} we demonstrate how to reparameterize some of the LVMs we consider in terms of their more conventional parameterizations.

\subsection{Mixture Models}

A mixture model describes the statistics of observations with a weighted sum of component models. We can interpret this sum as the observable density $q_X$ of the LVM $q_{XK} = q_{X \mid K} \cdot q_X$, by equating the likelihood $q_{X \mid K=k}$ at each $k$ with one of the mixture components, and the prior $q_K$ with the weights. The posterior $q_{K \mid X = x}$ of $q_{XK}$ is then a set of weights that provides a soft classification of the observation $x$.

The $d_K$-dimensional categorical exponential family $\mathcal M_K$ contains all densities over indices from $0$ to $d_K$. It is defined by the base measure $\mu_K(k) = 1$, and the sufficient statistic given by a ``one-hot'' vector, where $\V s_K(0) = \V 0$, and for $k>0$, $s_{K,i}(k) = 1$ if $i = k$, and 0 otherwise. Because the posterior of a mixture model $q_{K \mid X}$ is a density over indices, it must be in $\mathcal M_K$. If then we assume that the likelihood $q_{X \mid K} \in \mathcal M_X$ for a chosen exponential family $\mathcal M_X$ over $X$, then $q_{XK}$ is an element of the harmonium $\mathcal H_{XK}$ defined by $\mathcal M_X$ and the categorical family $\mathcal M_K$ (Thm.~\ref{thm:harmonium-specification})\footnote{See~\ref{app:mixture-model-to-harmonium} for how to express a mixture model as a harmonium.}. Moreover, the harmonium $\mathcal H_{XK}$ is conjugated.

\begin{theoremE}\label{thm:categorical-conjugation}

	Let $\mathcal H_{XK}$ be the harmonium defined by $\mathcal M_X$ and $\mathcal M_K$, where $\mathcal M_K$ is the categorical family of dimension $d_K$. Then $\mathcal H_{XK}$ is conjugated, and the conjugation parameters of any $q_{XK} \in \mathcal H_{XK}$ are given by
	\begin{equation}
		\begin{aligned}[t]\label{eq:categorical-parameters}
			\rho_i & = \psi_X(\eprms_X + \eprms_{XK,i}) - \psi_X(\eprms_X), \\
			\chi   & = \psi_X(\eprms_X),
		\end{aligned}
	\end{equation}
	where $(\eprms_X, \eprms_K, \iprms_{XK})$ are the parameters of $q_{XK}$, and $\eprms_{XK,i}$ is the $i$th column of $\iprms_{XK}$.
\end{theoremE}
\begin{proofE}
	We must show that for any $k \in \mathcal K$, Equation~\ref{eq:conjugation-equation} is satisfied for some $\chi$ and $\rprms_K$. For $k = 0$,
	\begin{align*}
		     & \psi_X(\eprms_X + \iprms_{XK} \cdot \V s_K(k)) = \V s_K(k) \cdot \rprms_K + \chi \\
		\iff & \chi = \psi_X(\eprms_X).
	\end{align*}
	For $k > 0$,
	\begin{align*}
		     & \psi_X(\eprms_X + \iprms_{XK} \cdot \V s_K(k)) = \V s_K(k) \cdot \rprms_K + \chi \\
		\iff & \psi_X(\eprms_X + \eprms_{XK,i}) = \rho_{K,i} + \chi                             \\
		\iff & \rho_{K,i} = \psi_X(\eprms_X + \eprms_{XK,i}) - \chi.
	\end{align*}
	The conjugation parameters defined by Equations~\ref{eq:categorical-parameters} thus satisfy the Conjugation Equation (Eq.~\ref{eq:conjugation-equation}) for any $k \in \mathcal K$.
\end{proofE}

\begin{figure}[t]
	\includegraphics{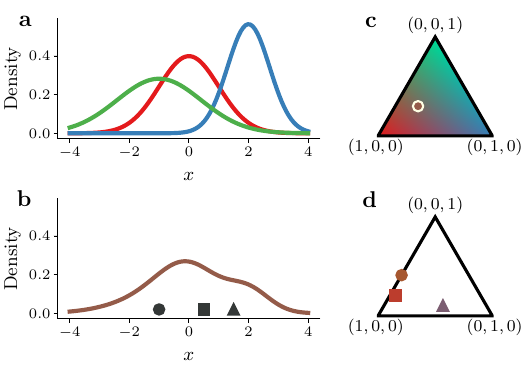}
	\caption{\emph{A mixture of normal densities.} \textbf{a:} The three component densities $q_{X \mid K=0}$ (red line), $q_{X \mid K=1}$ (blue line), and $q_{X \mid K=2}$ (green line) of a mixture of normal densities $q_{XK}$.\ \textbf{b:} The mixture density $q_X$ of $q_{XK}$ (brown line), and three observations $x^{(1)}$, (circle) $x^{(2)}$ (square), and $x^{(3)}$ (diamond).\ \textbf{c:} The exponential family $\mathcal M_K$ (coloured simplex) of categorical densities over 3 states, and the weights $(0.5,0.2,0.3)$ of $q_K$ (brown dot).\ \textbf{d:} The weights (dot colour) of the posterior densities $q_{K \mid X={x^{(1)}}}$ (circle), $q_{K \mid X={x^{(2)}}}$ (square), and $q_{K \mid X={x^{(3)}}}$ (diamond).}\label{fig:mixture-normal}
\end{figure}

In Figure~\ref{fig:mixture-normal} we demonstrate a simple mixture of normal densities $q_{XK}$. Although such a mixture model is far from novel, it exemplifies key features of conjugated harmoniums that we will generalize to more complex models. On one hand, the likelihood $q_{X \mid K} \in \mathcal M_X$ for each $k \in \{0, 1, 2\}$ (Fig.~\ref{fig:mixture-normal}\textbf{a}), yet the observable density $q_X$ is not in $\mathcal M_X$ (Fig.~\ref{fig:mixture-normal}\textbf{b}). On the other hand, both the prior (Fig.~\ref{fig:mixture-normal}\textbf{c}) and the posterior (Fig.~\ref{fig:mixture-normal}\textbf{d}) are in $\mathcal M_K$.

\subsection{Linear (Gaussian) Models}

A linear Gaussian model (LGM) $\mathcal G_{XZ}$ is a set of $(n+m)$ dimensional multivariate normal densities over $n$- and $m$-dimensional observable and latent variables $X$ and $Z$, respectively. Where $\mathcal M_X$ and $\mathcal M_Z$ are the $n$ and $m$ dimensional multivariate normal families, respectively, the analytic properties of normal densities ensures that for any $q_{XZ} \in \mathcal G_{XZ}$, both the likelihood $q_{X \mid Z}$ and observable density $q_X$ are in $\mathcal M_X$, and the posterior $q_{Z \mid X}$ and prior $q_Z$ are in $\mathcal M_Z$~\cite{bishop_pattern_2006}. Because $q_X \in \mathcal M_X$, the LGM $\mathcal G_{XZ}$ has no additional representational power over $\mathcal M_X$, and so typically we assume additional constraints on the LGM\@. For example, Factor analysis (FA) and principal component analysis (PCA) are forms of LGM that assume that $\mathcal M_X$ is the family of multivariate normals with diagonal or isotropic covariance matrices, respectively --- consequently, the observable densities for FA and PCA may have non-diagonal covariance matrices and lay outside of $\mathcal M_X$. These assumptions allow FA and PCA to effectively model high dimensional observations $X$, where an unconstrained multivariate normal model would suffer from the curse of dimensionality.

The exponential family of $n$-dimensional multivariate normal densities $\mathcal M_X$ is defined by the base measure $\mu_X(\V x) = {(2\pi)}^{-\frac{n}{2}}$ and sufficient statistic $\V s_X(\V x) = (\V x, \tril(\V x \otimes \V x))$, where $\tril(\V A)$ is the lower triangular part of the given matrix $\V A$ (this construction ensures the minimality of $\V s_X$). Similarly, we may partition the natural parameters $\eprms_X$ of any $q_X \in \mathcal M_X$ into $\eprms_X = (\eprms^m_X, \eprms^\sigma_X)$, such that $\eprms^m_X$ and $\eprms^\sigma_X$ weigh the statistics $\V x$ and $\tril(\V x \otimes \V x)$, respectively, in the exponential family form of $q_X$ (Eq.~\ref{eq:exponential-family-density}). We may also define the so-called precision matrix $\iprms_X^{\sigma}$ that satisfies $\eprms_X \cdot \V s_X(\V x) = \V x \cdot \eprms^m_X + \V x \cdot \iprms^\sigma_X \cdot \V x$ by converting $\eprms^\sigma_X$ into a lower triangular matrix $\iprms^L_X$, and letting $\iprms^\sigma_X = \frac{1}{2}(\iprms^L_X  +  \iprms^U_X)$, where $\iprms^U_X$ is the transpose of $\iprms^L_X$.

For any LGM density $q_{XZ} \in \mathcal G_{XZ}$, both $q_{X \mid Z} \in \mathcal M_X$ and $q_{Z \mid X} \in \mathcal M_Z$, and LGMs are therefore subsets of the harmonium $\mathcal H_{XZ}$ defined by $\mathcal M_X$ and $\mathcal M_Z$ (Thm.~\ref{thm:harmonium-specification})\footnote{See~\ref{app:linear-gaussian-to-harmonium} for how to reparameterize an LGM as a harmonium.}. However, this larger harmonium $\mathcal H_{XZ}$ also contains densities with interactions between the second-order statistics of $X$ and $Z$, whereas the likelihoods and posteriors of a linear Gaussian model $\mathcal G_{XZ}$ model only homoscedastic (first-order) interactions. $\mathcal G_{XZ}$ is thus the subset of $\mathcal H_{XZ}$ for which all second-order interactions $\theta_{XZ,ij}$ are 0. Moreover, the densities in this subset are conjugated.

\begin{theoremE}\label{thm:linear-gaussian-conjugation}
	Let $\mathcal H_{XZ}$ be a harmonium defined by the exponential families $\mathcal M_X$ and $\mathcal M_Z$, where $\mathcal M_X$ is the multivariate normal family, and the sufficient statistic of $\mathcal M_Z$ is given by $\V s_Z(\V z) = (\V z, \tril(\V z \otimes \V z))$. Then $q_{XZ} \in \mathcal H_{XZ}$ is conjugated if
	\begin{equation}\label{eq:linear-gaussian-constraint}
		\iprms_{XZ} = \begin{pmatrix} \iprms^{m}_{XZ} & \V 0 \\ \V 0 & \V 0 \end{pmatrix},
	\end{equation}
	with conjugation parameters $\chi$, $, \rprms^{m}$, and $\V P^{\sigma}$ given by
	\begin{equation}
		\begin{aligned}[t]\label{eq:linear-gaussian-conjugation}
			\chi          & = -\frac{1}{4} \eprms^m_X \cdot {\iprms_X^{\sigma}}^{-1} \cdot \eprms^m_X - \frac{1}{2}\log |-2 \iprms^{\sigma}_X|, \\
			\rprms^m      & = -\frac{1}{2} \iprms^{m}_{ZX} \cdot {\iprms_X^{\sigma}}^{-1} \cdot \eprms^m_X,                                     \\
			\V P^{\sigma} & = -\frac{1}{4} \iprms^{m}_{ZX} \cdot {\iprms_X^{\sigma}}^{-1} \cdot \iprms^{m}_{XZ},
		\end{aligned}
	\end{equation}
	where $\iprms^{m}_{ZX}$ is the transpose of $\iprms^{m}_{XZ}$, and $\rprms^m$ and $\V P^{\sigma}$ are the conjugation parameters of the precision-weighted means $\eprms^m_X$ and precision matrix $\iprms_Z^{\sigma}$, respectively.
\end{theoremE}
\begin{proofE}
	The log-partition function of a multivariate normal family $\mathcal M_X$ is given by
	\begin{equation}\label{eq:gaussian-log-partition}
		\psi_X(\eprms_X^{m}, \iprms_X^{\sigma}) = -\frac{1}{4}\eprms^{m}_X \cdot {\iprms_X^{\sigma}}^{-1} \cdot \eprms_X^{m} - \frac{1}{2}\log |-2\iprms_X^{\sigma}|.
	\end{equation}
	Assuming Equation~\ref{eq:linear-gaussian-constraint} holds, we may express the LHS of Equation~\ref{eq:conjugation-equation} as
	\begin{align*}
		\psi_X & (\eprms_X + \iprms_{XZ} \cdot \V s_Z(\V z))
		\\& = \psi_X(\eprms^{m}_X + \iprms^{m}_{XZ} \cdot \V z, \iprms_X^{\sigma})                                                                                                              \\
		       & = -\frac{1}{4}(\eprms^{m}_X + \iprms^{m}_{XZ} \cdot \V z) \cdot {\iprms_X^{\sigma}}^{-1} \cdot (\eprms^{m}_X + \iprms^{m}_{XZ} \cdot \V z) \\ &\hspace{2em}- \frac{1}{2}\log |-2\iprms_X^{\sigma}|,
	\end{align*}
	and the RHS as
	\begin{equation*}
		\V s_Z(\V z) \cdot \rprms + \chi = \V z \cdot \rprms^{m}_Z + \V z \cdot \V P_Z^{\sigma} \cdot \V z + \chi,
	\end{equation*}
	so that
	\begin{align*}
		\V z \cdot \rprms^{m}_Z & + \V z \cdot \V P_Z^{\sigma} \cdot \V z + \chi
		=                                                                                                                                           \\& -\frac{1}{2} \V z \cdot \iprms^{m, \trans}_{XZ} \cdot {\iprms_X^{\sigma}}^{-1} \cdot \eprms^m_X                   \\
		                        & -\frac{1}{4} \V z \cdot \iprms^{m, \trans}_{XZ} \cdot {\iprms_X^{\sigma}}^{-1} \cdot \iprms^{m}_{XZ} \cdot \V z   \\
		                        & -\frac{1}{4} \eprms^m_X \cdot {\iprms_X^{\sigma}}^{-1} \cdot \eprms^m_X - \frac{1}{2}\log |-2 \iprms^{\sigma}_X|,
	\end{align*}
	which is clearly solved by Equations~\ref{eq:linear-gaussian-conjugation}.
\end{proofE}

We derived Theorem~\ref{thm:linear-gaussian-conjugation} to establish the conditions for conjugated LGMs, yet these conditions delineate a larger class of conjugated harmonium. On one hand, the observable exponential family $\mathcal M_X$ must be multivariate normal, as according to the Conjugation Lemma (Thm.~\ref{thm:conjugation-lemma}), the form of the log-partition function $\psi_X$ of $\mathcal M_X$ determines the form of the conjugation parameters in Equations~\ref{eq:linear-gaussian-conjugation}, and the log-partition function of the multivariate normal family has a particularly simple form~(Eq.~\ref{eq:gaussian-log-partition}). On the other hand, the condition on the latent exponential family $\mathcal M_Z$ is only that the sufficient statistic is given by $\V s_Z(\V z) = (\V z, \tril(\V z \otimes \V z))$, so that theoretically we may vary the base measure $\mu_Z$ and sample space $\mathcal Z$ of $\mathcal M_Z$ and continue to satisfy the conditions of the theorem. One example of such an $\mathcal M_Z$ is the Boltzmann machine~\cite{ackley_learning_1985}, which models the second-order statistics between binary random variables, typically called binary neurons. Another example is the Riemann-Theta Boltzmann machine~\cite{krefl_riemann-theta_2020}, which extends the Boltzmann machine sample space $\mathcal Z = {\{0,1\}}^m$ to the space of countable vectors $\mathcal Z = \mathbb N^m$.

The conjugated harmonium defined by the family of multivariate normals $\mathcal M_X$ and a Boltzmann machine $\mathcal M_Z$ is known as a Gaussian-Boltzmann machine, and has been previously explored for modelling the joint density of continuous stimuli and neural recordings~\cite{gerwinn_joint_2009} --- here we demonstrate how they can serve as tractable latent variable models~(Fig.~\ref{fig:gaussian-boltzmann}). We consider a Gaussian-Boltzmann harmonium $\mathcal G_{XZ}$ defined by the bivariate normal family $\mathcal M_X$ and Boltzmann machine $\mathcal M_Z$ with 6 neurons, restricted to the densities that satisfy Equation~\ref{eq:linear-gaussian-constraint}. We learn $q_{XZ} \in \mathcal G_{XZ}$ by fitting the model to a synthetic dataset generated from two, noisy concentric circles (Fig.~\ref{fig:gaussian-boltzmann}\textbf{a}). Analogous to a mixture model, we consider a set of ``component'' densities by evaluating the likelihood $q_{X \mid Z=\V z^{(i)}}$ at the one-hot vectors $\V z^{(i)}$ for each neuron $i$, such that $z^{(i)}_j = 1$ if $i = j$, and 0 otherwise (Fig.~\ref{fig:gaussian-boltzmann}\textbf{a}). Although each $q_{X \mid Z=\V z^{(i)}}$ has the same bivariate normal shape, the observable density $q_X$ is not in the family of bivariate normals, and successfully captures the concentric circle structure (Fig.~\ref{fig:gaussian-boltzmann}\textbf{b}). The correlation matrix of the prior $q_Z \in \mathcal M_Z$ reveals how the population of neurons encodes the concentric circles (Fig.~\ref{fig:gaussian-boltzmann}\textbf{c}). Similarly, the moment matrices $\E_Q[Z \otimes Z \mid X = \V x^{(i)}]$ of the posterior $q_{Z \mid X = \V x^{(i)}} \in \mathcal M_Z$ at 4 example observations $\V x^{(i)}$ demonstrates how $q_{XZ}$ encodes each point within the concentric circles by activating and correlating certain subsets of neurons, which mixes and distorts the components $q_{X \mid Z=\V z^{(i)}}$ (Fig.~\ref{fig:gaussian-boltzmann}\textbf{d}). We cover how we trained this model in Section~\ref{sec:training-harmoniums}.

\begin{figure}[t]
	\begin{center}
		\includegraphics{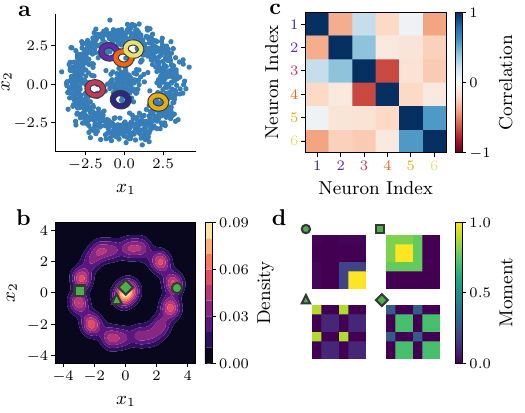}
	\end{center}
	\caption{\emph{A Gaussian-Boltzmann model.}\ \textbf{a:} Training data (blue dots) used to learn a Gaussian-Boltzmann density $q_{XZ}$, and confidence ellipses (coloured circles) of the likelihoods $q_{X \mid Z=\V z^{(i)}}$ given one-hot vectors $\V z^{(i)}$ (neuron index indicated by colour).\ \textbf{b:} Observable density $q_Z$, and example observations $\V x^{(1)}, \dots, \V x^{(4)}$ (green shapes).\ \textbf{c:} Correlation matrix of the prior $q_Z$.\ \textbf{d:} Moment matrices $\E_Q[Z \otimes Z \mid X = \V x^{(i)}]$ of the posteriors for each example observation (labelled by green shape).}\label{fig:gaussian-boltzmann}
\end{figure}

We note that an exponential family $\mathcal M_Z$ with a sufficient statistic $\V s_Z$ that is a strict subset of $(\V z, \tril(\V z \otimes \V z))$ does not in general satisfy the conditions of Theorem~\ref{thm:linear-gaussian-conjugation}. Consider, for example, the family of isotropic normal distributions defined by the sufficient statistic $\V s_Z(\V x) = (\V x, \V x \odot \V x)$, where $\odot$ is the element-wise vector product, or the family of restricted Boltzmann machines, which partitions the binary neurons of a Boltzmann machine into two groups, and assumes there are no interactions between binary neurons within the same group. Because the form of the restricted sufficient statistic $\V s_Z$ determines the form of the natural parameters $\eprms_Z$ of any $q_Z \in \mathcal M_Z$, the form of $\eprms_Z$ will be a subset of the form of the unrestricted conjugation parameters $\rprms = (\rprms^m, \V P^{\sigma})$ in Equations~\ref{eq:linear-gaussian-conjugation}. As such, formulae that rely on adding them (e.g. Equations~\ref{eq:latent-parameters} or~\ref{eq:conjugated-posterior-parameters}) will be inconsistent. In essence, the manifolds of isotropic normals or restricted Boltzmann machines are not complex enough to serve as conjugate prior families for likelihoods that are multivariate normal distributed.

\subsection{Probabilistic Population Codes}

Neuroscientists often model the spiking activity of populations of neurons as random vectors of counts $N = (N_1, \dots, N_{d_N})$, where each $N_i$ is the spike count of a single neuron. These spike counts may also depend on a stimulus or environmental variable $Z$, and theories of probabilistic population coding use the framework of Bayesian inference to explain how the neural activity $N$ encodes and processes information about $Z$. In particular, researchers have found that, under certain conditions, stimulus-dependent models with Poisson-distributed spike counts can support optimal Bayesian inference~\cite{ma_bayesian_2006,beck_marginalization_2011}. As we next show, these conditions are special cases of the Conjugation Lemma (Thm.~\ref{thm:conjugation-lemma}).

The family of Poisson distributions $\mathcal P_{N_i}$ is defined by the sufficient statistic $s_{N_i}(n) = n$ and base measure $\mu_{N_i}(n) = \frac{1}{n!}$. By extension, the family of independent Poisson product distributions $\mathcal M_N$ is defined by the sufficient statistic $\V s_N(\V n) = \V n$ and base measure $\mu_N(\V n) = {\big (\prod_{i=1}^{d_N} n_i! \big )}^{-1}$. Every density $q_N \in \mathcal M_N$ can be factored into $q_N = q_{N_1} \cdots q_{N_{d_N}}$ where each $q_{N_i} \in \mathcal P_{N_i}$ is a Poisson distribution. In the context of neuroscience a probabilistic population code for a stimulus $Z$ is essentially a joint density $q_{NZ}$. We refer to the mean spike-count $\E_Q[N_i]$ of $N_i$ under $Q$ as the firing rate, and its stimulus dependent firing-rate $\E_Q[N_i \mid Z]$ as its tuning curve. Population codes are typically specified so that the likelihood $q_{N \mid Z}$ of a population code $q_{NZ}$ is in the Poisson product family $\mathcal M_N$, and the posterior $q_{Z \mid N}$ is in some chosen exponential family $\mathcal M_Z$ over the stimulus. Such a population code $q_{XZ}$ is thus an element of the harmonium $\mathcal H_{NZ}$ defined by $\mathcal M_N$ and $\mathcal M_Z$ (Thm.~\ref{thm:harmonium-specification}). Moreover, under certain conditions on the sum of tuning curves, such population codes are approximately conjugated.

\begin{theoremE}\label{thm:neural-conjugation}
	Let $\mathcal H_{NZ}$ be a harmonium defined by $\mathcal M_N$ and $\mathcal M_Z$, where $\mathcal M_N$ is the Poisson product family. Then $q_{NZ} \in \mathcal H_{NZ}$ is conjugated with conjugation parameters $\rprms$ and $\chi$ if it satisfies
	\begin{equation}\label{eq:tuning-curve-conjugation}
		\sum_{i=1}^{d_N}\E_Q[N_i \mid Z = z] = \rprms \cdot \V s_Z(z) + \chi.
	\end{equation}
\end{theoremE}
\begin{proofE}
	Firstly, note that the log-partition function of a Poisson exponential family $\mathcal M_{N_i}$ is given by $\psi_{N_i}(\theta_{N_i}) = e^{\theta_{N_i}}$, which implies $\psi_{N_i}(\theta_{N_i}) = \partial_{\theta_{N_i}}\psi_{N_i}(\theta_{N_i}) = \E_Q[N_i]$. Moreover, the independence of the $N_i$ implies that $\psi_N(\eprms_N) = \sum_{i=1}^{d_N} \psi_{N_i}(\theta_{N,i})$, and therefore that $\psi_N(\eprms_N) = \sum_{i=1}^{d_N} \E_Q[N_i]$.

	Now, suppose $q_{NZ} \in \mathcal H_{NZ}$ has natural parameters $(\eprms_N, \eprms_Z, \iprms_{NZ})$. Then the likelihood $q_{N \mid Z = z} \in \mathcal M_N$ has natural parameters $\eprms_N + \iprms_{NZ} \cdot \V s_Z(z)$, so that $\psi_N(\eprms_N + \iprms_{NZ} \cdot \V s_Z(z)) = \sum_{i=1}^{d_N}\E_Q[N_i \mid Z = z]$. Therefore, if Equation~\ref{eq:tuning-curve-conjugation} holds,
	\begin{align*}
		\rprms_N \cdot \V s_N(\V n) + \chi & = \sum_{i=1}^{d_N}\E_Q[N_i \mid Z = z] \\&= \psi_N(\eprms_N + \iprms_{NZ} \cdot \V s_Z(z)),
	\end{align*}
	and therefore by the Conjugation Lemma (Thm.~\ref{thm:conjugation-lemma}), $q_{NZ}$ is conjugated with conjugation parameters $\rprms$ and $\chi$.
\end{proofE}

The canonical probabilistic population code has Gaussian tuning curves that sum to a constant~\cite{ma_bayesian_2006}, and Equation~\ref{eq:conjugation-equation} generalizes this constraint by allowing the sum to depend on the sufficient statistics of $z$. Unlike Theorems~\ref{thm:categorical-conjugation} and~\ref{thm:linear-gaussian-conjugation}, the result of Theorem~\ref{thm:neural-conjugation} does not specify a manifold of probability densities that exactly satisfy Equation~\ref{eq:conjugation-equation}. Nevertheless, depending on the choice of $\mathcal M_Z$, tuning curves often have simple (e.g.\ von Mises or Gaussian) shapes that serve well as basis functions, so that ensuring that the sum of the tuning curves is an affine function of $\V s_Z(z)$ is easily achievable with a sufficient numbers of model neurons.

We demonstrate conjugated population codes with a model of how the spike counts $N_1, \dots, N_8$ of 8 neurons can encode the location of an oriented stimulus (Fig.~\ref{fig:population-code-von-mises}). We model the joint density of $N$ and $Z$ with the harmonium $\mathcal M_{NZ}$ defined by the Poisson product family $\mathcal M_N$ with $d_N = 8$ Poisson neurons, and the von Mises family $\mathcal M_Z$, which is defined by the sufficient statistic $\V s_Z = (\cos z, \sin z)$, and base measure $\mu_Z(z) = \frac{1}{2\pi}$. We choose a population code $q_{NZ} \in \mathcal H_{NZ}$, and demonstrate that it is approximately conjugated by fitting a function of the form $\rho_1 \cos z + \rho_2 \sin z + \chi$ to the sum of its tuning curves (Fig.~\ref{fig:population-code-von-mises}\textbf{a}). We also find that the observable density $q_N$ is not an element of $\mathcal M_N$, and rather describes a regular pattern of correlations between the neurons (Fig.~\ref{fig:population-code-von-mises}\textbf{b}). Nevertheless, the prior $q_Z$ is in $\mathcal M_Z$ (Fig.~\ref{fig:population-code-von-mises}\textbf{c}). Finally, given spike counts generated in response to a stimulus, we use the posterior $q_{Z \mid N}$ to decode an accurate von Mises density over stimulus orientation, weighed against prior beliefs about stimulus orientation (Fig.~\ref{fig:population-code-von-mises}\textbf{d}).

\begin{figure}[t]
	\includegraphics{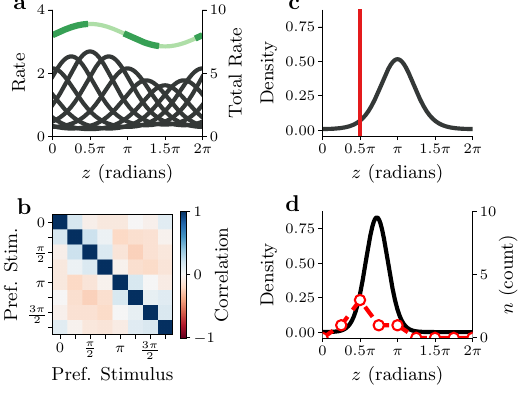}
	\caption{\emph{Conjugation with population codes.} The population code $q_{NZ}$ is a density over 8 spike counts $N_i$ and an oriented stimulus $Z$.\ \textbf{a:} Tuning curves $\E_Q[N_i \mid Z]$ (black lines), their sum (light green line), and fit of Equation~\ref{eq:tuning-curve-conjugation} (dashed green line) to the sum.\ \textbf{b:} Correlation matrix of $q_N$, with neuron identified by tuning curve peak (preferred stimulus).\ \textbf{c:} Population code prior $q_Z$ (black line), and example stimulus $z^{(1)}$ (red line).\ \textbf{d:} Example spike count vector $N^{(1)} \sim q_{N \mid Z = z^{(1)}}$ (red line-dots), and posterior $q_{Z \mid N = N^{(1)}}$ (black line).}\label{fig:population-code-von-mises}
\end{figure}

\subsection{Bayesian Parameter Estimation}

\begin{figure*}[t]
	\includegraphics{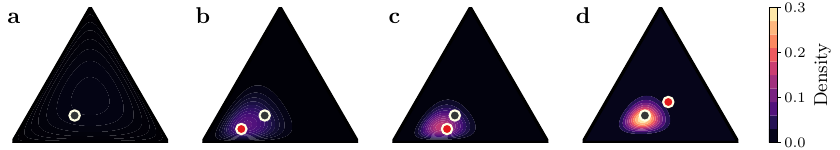}
	\caption{\emph{Inference over a categorical distribution.} \textbf{a:} The true weights of a categorical distribution (black dot) and the Dirichlet prior density (contours).\ \textbf{b-d:} The true weights of a categorical distribution (black dot) and the Dirichlet posterior density (contours) after 10 (\textbf{b}), 20 (\textbf{c}), and 30 (\textbf{d}) observations, as well as the centre of mass of each set of 10 observations (red dots).
	}\label{fig:categorical-inference}
\end{figure*}

The final conjugated harmonium we consider returns us to the conceptual origins of conjugate priors, in which the latent variables are the parameters of an exponential family model~\cite{diaconis_conjugate_1979,arnold_conjugate_1993}. A classic result in Bayesian parameter estimation is that any exponential family $\mathcal M_X$ has a conjugate prior family $\mathcal M_Z$ with sufficient statistic $s_Z(z) = (z, \psi_X(z))$, where $\psi_X$ is the log-partition function of $\mathcal M_X$.

\begin{theoremE}\label{thm:bayesian-naturals}
	Suppose $\mathcal M_X$ is a $d_X$-dimensional exponential family, and let $\mathcal M_Z$ be the $d_{X} + 1$-dimensional exponential family with sample space $\mathcal Z = \Theta_X$, and sufficient statistic given by $\V s_Z(z) = (z, \psi_X(z))$. Then $q_{XZ} \in \mathcal H_{XZ}$ is conjugated if $\eprms_X = \V 0$, and

	\begin{equation}\label{eq:pseudo-identity}
		\iprms_{XZ} =
		\begin{bmatrix}
			1      & 0      & \cdots & 0      & 0      \\
			0      & 1      & \cdots & 0      & 0      \\
			\vdots & \vdots & \ddots & \vdots & \vdots \\
			0      & 0      & \cdots & 1      & 0      \\
		\end{bmatrix}
	\end{equation}
\end{theoremE}
\begin{proofE}
	Suppose $\eprms_X = \V 0$ and $\iprms_{XZ}$ is given by Eq.~\ref{eq:pseudo-identity}. Then
	\begin{equation*}
		\psi_X(\eprms_X + \iprms_{XZ} \cdot \V s_Z(z)) = \psi_X(z) = \V s_Z(z) \cdot \rprms + \chi,
	\end{equation*}
	where $\rprms = (0,0,\dots,0,1)$ and $\chi = 0$. Therefore $q_{XZ}$ is conjugated by Lemma~\ref{thm:conjugation-lemma}.
\end{proofE}

Although this provides a general solution to Bayesian parameter estimation, the resulting conjugate prior families are often intractable, and many exponential families have alternative conjugate priors with more tractable forms. We explore one example of such a conjugate prior in in the following section.

\section{Conjugation and Recursive Inference}\label{sec:conjugation-independent-observations}

Consider a sequence of observable variables $X_1, \dots, X_n$ that are conditionally independent given the latent variable $Z$, such that the conditional density of the observable variables given the latent variable is $p_{X_1, \dots, X_n \mid Z} = \prod_{i=1}^n p_{X_i \mid Z}$. Then we may express $p_{Z \mid X_1, \dots, X_n}$ in the recursive form
\begin{multline}
	p_{Z \mid X_1, \dots, X_n}(z \mid x_1, \dots, x_n) \\ \propto p_{X_n \mid Z}(x_n \mid z) p_{Z \mid X_1, \dots, X_{n-1}}(z \mid x_1, \dots, x_{n-1}),
	\label{eq:recursive-bayesian-inference}
\end{multline}
where for the base-case $n=1$, we define $p_{Z \mid X_1, \dots, X_{n-1}}$ as the prior $p_Z$. Observe that this recursive relation is a single application of Bayes' rule (Eq.~\ref{eq:bayes-rule}) given the prior $p_{Z \mid X_1, \dots, X_{n-1}}$, so that Equation~\ref{eq:recursive-bayesian-inference} reduce sequential inference to iterative applications of Bayes' rule.

In general, however, each application of Bayes' rule may increase the complexity of the beliefs, and ultimately produce an intractable posterior. We may avoid this by assuming that there is a single exponential family $\mathcal M_Z$ that is a conjugate prior family for each likelihood $p_{X_i \mid Z}$, which ultimately ensures that the posterior over $Z$ given all the observations is also an element of $\mathcal M_Z$.

\begin{theoremE}
	Suppose that the sequence of observable variables $X_1, \dots, X_n$ are conditionally independent given the latent variable $Z$, that $\mathcal M_Z$ is a conjugate prior family for each $p_{X_i \mid Z}$, and that $p_Z \in \mathcal M_Z$ with parameters $\eprms^*_Z$. Moreover, suppose that each $p_{X_i \mid Z} \in \mathcal M_X$ is defined by natural parameters $\eprms_{X_i}$ and $\iprms_{X_i Z}$, and conjugation parameters $\rprms_i$ and $\chi_i$ according Equation~\ref{eq:conjugating-likelihood}. Then the parameters $\eprms_{Z \mid X_1, \dots, X_n}$ of the posterior $p_{Z \mid X_1, \dots, X_n} \in \mathcal M_Z$ are given by
	\begin{multline}\label{eq:recursive-bayesian-parameters}
		\eprms_{Z \mid X_1, \dots, X_n}(x_1, \dots, x_n) =\\ \eprms^*_Z + \sum_{i=1}^n \V s_{X_i}(x_i) \cdot \iprms_{X_i Z} - \rprms_i.
	\end{multline}
\end{theoremE}
\begin{proofE}
	For $n=1$, the prior is $p_Z$ and its parameters are $\eprms^*_Z$. Given $p_{X_1 \mid Z}$ with parameters $\eprms_{X_1}$ and $\iprms_{X_1 Z}$, Corollary~\ref{thm:conjugating-posterior} implies that the parameters of $p_{Z \mid X_1}(x_1)$ are $\eprms_{Z \mid X_1} = \eprms^*_Z + \V s_{X_1}(x_1) \cdot \iprms_{X_1 Z} - \rprms_1$.

	For $n$ assume that the parameters $\eprms_{Z \mid X_1, \dots, X_n}$ of $p_{Z \mid X_1, \dots, X_n}$ are given by Equation~\ref{eq:recursive-bayesian-parameters}. Where $p_{X_{n+1} \mid Z}$ has parameters $\eprms_{X_{n+1}}$ and $\iprms_{X_{n+1} Z}$, Corollary~\ref{thm:conjugating-posterior} implies that the parameters of $p_{Z \mid X_1, \dots, X_{n+1}}(x_1, \dots, x_{n+1})$ are
	\begin{align*}
		\eprms_{Z \mid X_1, \dots, X_{n+1}} & = \eprms_{Z \mid X_1, \dots, X_n}(x_1, \dots, x_n)                               \\ &\hspace{2em} + \V s_{X_{n+1}}(x_{n+1}) \cdot \iprms_{X_{n+1} Z} - \rprms_{n+1} \\
		                                    & = \eprms^*_Z + \sum_{i=1}^{n+1} \V s_{X_i}(x_i) \cdot \iprms_{X_i Z} - \rprms_i.
	\end{align*}
	Therefore, by induction, Equation~\ref{eq:recursive-bayesian-parameters} holds for any $n$.
\end{proofE}

By combining the general solution to Bayesian parameter estimation (Thm.~\ref{thm:bayesian-naturals}) with the above formula for recursive Bayesian inference (Eq.~\ref{eq:recursive-bayesian-parameters}) we may conclude that the posterior for the general solution to Bayesian parameter estimation has parameters $\eprms_{Z \mid X_1, \dots, X_n}(x_1, \dots, x_n) = \eprms_Z + \sum_{i=1}^n (\V s_{X_i}(x_i), 1)$, which indeed matches the established formula~\cite{murphy_probabilistic_2023}. As a more practical example, suppose that $\mathcal H_{XZ}$ is the harmonium defined by the categorical family $\mathcal M_X$ and the Dirichlet family $\mathcal M_Z$. It is easy to check that $q_{XZ} \in \mathcal H_{XZ}$ is conjugated if $\eprms_X = \V 0$ and
\begin{equation}
	\iprms_{XZ} =
	\begin{bmatrix}
		-1     & 1      & \cdots & 0      \\
		\vdots & \vdots & \ddots & \vdots \\
		-1     & 0      & \cdots & 1      \\
	\end{bmatrix}
\end{equation}
with conjugation parameters $\rprms = (-1,0,\dots,0)$ and $\chi = 0$. By simulating this model we find that, given an increasingly large sequence of observations from a categorical distribution, the posterior Dirichlet density concentrates around the weights of the categorical distribution~(Fig.~\ref{fig:categorical-inference}).

\section{Fitting Conjugated Harmoniums}\label{sec:training-harmoniums}

We formulate the problem of fitting a model to data within the framework of log-likelihood maximization, or equivalently, cross-entropy minimization --- in this paper we favour the terminology of cross-entropy minimization to avoid any confusion of our training objective with the likelihood $q_{X \mid Z}$ of an LVM density $q_{XZ}$. In general, where $q_X$ is a parametric density over observable variable $X$ with parameters $\eprms$, the cross-entropy objective is
\begin{equation}\label{eq:cross-entropy}
	\frac{1}{n} \sum_{i=1}^n \mathcal L(X^{(i)}, \V \theta) = -\frac{1}{n} \sum_{i=1}^{n} \log q_X(X^{(i)}),
\end{equation}
where $X^{(1)}, \dots, X^{(n)}$ is an independent sample from $P_X$, and $\mathcal L(X^{(i)}, \V \theta) = -\log q_X(X^{(i)})$ is the pointwise cross-entropy.

When $q_X$ is an element of an exponential family $\mathcal M_X$ with parameters $\eprms_X$, the pointwise cross-entropy reduces to $\mathcal L(X^{(i)}, \V \theta) = -\V s_X(X^{(i)}) \cdot \eprms_X + \psi_X(\eprms_X)$. Moreover, $\frac{1}{n}\sum_{i=1}^n \mathcal L_X$ is a convex function on the natural parameter space $\Theta_X$ of $\mathcal M_X$, and its optimum is given by $\V \tau_X^{-1}(-\frac{1}{n} \sum_{i=1}^{n} \V s_X(X^{(i)}))$, where $\V \tau_X^{-1}$ is the backward mapping of $\mathcal M_X$~\cite{wainwright_graphical_2008}.

\subsection{Expectation-Maximization}

For an exponential family LVM over observables $X$ and latent variables $Z$, the observable densities will not typically be exponential family distributed. In this case we may employ EM, which is one of the most widely applied algorithms for minimizing the cross-entropy of an LVM~\cite{dempster_maximum_1977}. For an arbitrary exponential family $\mathcal M_{XZ}$ over $X$ and $Z$, defined by the sufficient statistic $\V s_{XZ}$ and base measure $\mu_{XZ}$, the E-Step of EM is to calculate the conditional sufficient statistics
\begin{equation}\label{eq:expectation-step}
	\mprms_{XZ}^{(i)} = \E_Q[\V s_{XZ}(X, Z) \mid X = X^{(i)}],
\end{equation}
for a given $q_{XZ} \in \mathcal M_{XZ}$~\cite{wainwright_graphical_2008}. If the exponential family is a harmonium $\mathcal H_{XZ}$ defined by $\mathcal M_X$ and $\mathcal M_Z$, and $q_{XZ} \in \mathcal H_{XZ}$ with parameters $\eprms_X$, $\eprms_Z$, and $\iprms_{XZ}$, then the conditional sufficient statistics $\mprms^{(i)}_{XZ} = (\mprms^{(i)}_X, \mprms^{(i)}_Z, \V H^{(i)}_{XZ})$, are given by
\begin{equation}\label{eq:harmonium-expectation-step}
	\begin{aligned}[t]
		\mprms^{(i)}_X  & = \V s_X(X^{(i)}),                                         \\
		\mprms^{(i)}_Z  & = \V \tau_Z(\eprms_Z + \V s_X(X^{(i)}) \cdot \iprms_{XZ}), \\
		\V H^{(i)}_{XZ} & =  \mprms^{(i)}_X \otimes \mprms^{(i)}_Z,
	\end{aligned}
\end{equation}
for every $i$, where $\V \tau_Z$ is the forward mapping of $\mathcal M_Z$.

The objective of the M-Step of EM for an exponential family LVM is to minimize the cross-entropy loss (Eq.~\ref{eq:cross-entropy}) of the joint density $q_{XZ}$, where we use the conditional sufficient statistics of Equation~\ref{eq:expectation-step} to fill in the missing data. The M-Step objective function is thus
\begin{multline}
	\frac{1}{n} \sum_{i=1}^n \mathcal L(\mprms^{(i)}_{XZ},\eprms_{XZ}) =\\ -\frac{1}{n} \sum_{i=1}^{n} \mprms^{(i)}_{XZ} \cdot \eprms_{XZ} - \psi_{XZ}(\eprms_{XZ}).
\end{multline}
This is again a convex optimization problem, and its solution is given by the backward mapping, so that
\begin{equation}\label{eq:maximization-step}
	\argmin_{\eprms_{XZ}} \frac{1}{n} \sum_{i=1}^n\mathcal L(\mprms^{(i)}_{XZ},\eprms_{XZ}) = \V \tau_{XZ}^{-1}(\frac{1}{n} \sum_{i=1}^{n} \mprms_{XZ}^{(i)} ),
\end{equation}
where $\V \tau_{XZ}^{-1}$ is the backward mapping of $\mathcal M_{XZ}$ (or $\mathcal H_{XZ}$ in the harmonium case).  The EM algorithm thus minimizes the cross-entropy by iteratively updating the model parameters with Equations~\ref{eq:expectation-step} and~\ref{eq:maximization-step}. As such, we can implement exact EM for exponential family LVMs as long as we can evaluate $\V \tau_Z$ and $\V \tau^{-1}_{XZ}$.

Theorem~\ref{thm:conjugated-log-partition} provides a simpler expression for the log-partition function $\psi_{XZ}$ of a conjugated harmonium $\mathcal M_{XZ}$, and thereby a path for evaluating $\V \tau^{-1}_{XZ}$. Suppose $q_{XZ} \in \mathcal H_{XZ}$ is a conjugated harmonium density with natural parameters $\eprms_{XZ} = (\eprms_X, \eprms_Z, \iprms_{XZ})$, and conjugation parameters $\rprms$ and $\chi$, and let us consider $\rprms(\eprms_X, \iprms_{XZ})$ and $\chi(\eprms_X, \iprms_{XZ})$ as functions of the natural parameters. We may then combine the forward mapping $\V \tau_{XZ}(\eprms_{XZ}) = \partial_{\eprms_{XZ}}\psi_{XZ}(\eprms_{XZ})$ with Equation~\ref{eq:conjugated-log-partition} to express the $i$th mean parameter of $q_{XZ}$ as
\begin{multline}
	\eta_{XZ,i}  = \partial_{\theta_{XZ,i}} \chi(\rprms_X, \iprms_{XZ}) + \\ \V \tau_Z(\eprms_Z + \rprms(\eprms_X, \iprms_{XZ})) \cdot \partial_{\theta_{XZ,i}} \rprms(\eprms_X, \iprms_{XZ}),
\end{multline}
which for the latent mean parameters simplifies to
\begin{equation}
	\mprms_Z  = \V \tau_Z(\eprms_Z + \rprms(\eprms_X, \iprms_{XZ})).
\end{equation}
We may then derive $\V \tau^{-1}_{XZ}$ by inverting the forward mapping. This inversion is in fact possible for mixture models and linear Gaussian models, but certainly not in general, and so we must derive alternative approaches when the harmonium is less analytically tractable.

\subsection{Gradient Descent}

An alternative training algorithm that avoids computing $\V \tau^{-1}_{XZ}$ is cross-entropy gradient descent (CE-GD). To implement CE-GD for $q_{XZ} \in \mathcal H_{XZ}$ with natural parameters $(\eprms_X, \eprms_Z, \iprms_{XZ})$, we first compute the conditional expectations (Eq.~\ref{eq:harmonium-expectation-step}), and then the model expectations $(\mprms_X, \mprms_Z, \V H_{XZ}) = \V \tau_{XZ}(\eprms_X, \eprms_Z, \iprms_{XZ})$ using the forward mapping $\V \tau_{XZ}$. The gradients of $\mathcal L(X^{(i)}, \eprms_{XZ})$ with respect to the natural parameters of $q_{XZ}$ are then given by
\begin{equation}
	\begin{aligned}[t]\label{eq:cross-entropy-gradients}
		\partial_{\eprms_X} \mathcal L(X^{(i)},\eprms_{XZ})    & = \mprms_X - \mprms^{(i)}_X ,   \\
		\partial_{\eprms_Z} \mathcal L(X^{(i)},\eprms_{XZ})    & =  \mprms_Z - \mprms^{(i)}_Z,   \\
		\partial_{\iprms_{XZ}} \mathcal L(X^{(i)},\eprms_{XZ}) & =  \V H_{XZ} - \V H^{(i)}_{XZ}.
	\end{aligned}
\end{equation}
Using these gradients we may then iteratively update the natural parameters of $\eprms_X$, $\eprms_Z$, and $\iprms_{XZ}$, using any standard gradient pursuit algorithm such as Adam~\cite{kingma_adam_2014}, and by following the average gradient or using a batch strategy.

Both EM and CE-GD rely on Equations~\ref{eq:expectation-step}, and differ in either solving the M-Step with the backward mapping (Eq.~\ref{eq:maximization-step}) or following the cross-entropy gradients (Eq.~\ref{eq:cross-entropy-gradients}). Yet the M-Step is a convex optimization problem, and we could also approximate its solution with gradient descent. We thus define expectation-maximization gradient descent (EM-GD) as the algorithm that approximates the M-Step by pursuing the cross-entropy gradients in Equations~\ref{eq:cross-entropy-gradients} and recomputing $\mprms_{XZ}$ at every step, while holding $\mprms^{(i)}_{XZ}$ fixed. After convergence to the M-Step optimum, we recompute $\mprms^{(i)}_{XZ}$ and begin another iteration of EM-GD\@.

Both CE-GD and EM-GD can effectively train LVMs while avoiding the evaluation of the backward mapping $\V \tau^{-1}_{XZ}$. On one hand, CE-GD descends the cross-entropy objective directly, and avoids excessive computations that might arise for EM-GD while near the optimum of an M-Step. On the other hand, EM-GD can reuse the conditional expectations $\mprms^{(i)}_{XZ}$ over multiple gradient steps, which may be desirable if they are computationally expensive to evaluate. In addition, EM and EM-GD solve a sequence of convex optimization problems, rather than solving a single non-linear optimization problem like CE-GD, which can in practice improve the stability of the learning algorithm.

\subsection{Monte Carlo Estimation}

For cases where we cannot compute the conditional or model expectations of the sufficient statistics analytically, we can often estimate them through sampling and Monte Carlo methods. Markov chain algorithms like Gibbs sampling can generate approximate samples from LVMs that we can then use to estimate their necessary expectations. Gibbs sampling, however, can be prohibitively slow, and algorithms such as contrastive divergence~\cite{hinton_training_2002,welling_exponential_2005} were developed to approximate the cross-entropy gradient while avoiding full Gibbs sampling. Nevertheless, even contrastive divergence can still necessitate large numbers of sampling cycles, and its convergence properties can be difficult to characterize~\cite{bengio_justifying_2009, sutskever_convergence_2010}.

In contrast, we can directly generate exact samples from conjugated harmoniums. Consider a harmonium $\mathcal H_{XZ}$ defined by $\mathcal M_X$ and $\mathcal M_Z$, and suppose that there are efficient algorithms for sampling from the densities in $\mathcal M_X$ and $\mathcal M_Z$. If $q_{XZ} \in \mathcal H_{XZ}$ is a conjugated harmonium density, then its prior $q_Z \in \mathcal M_Z$, and we can generate an exact sample point $(X,Z) \sim q_{XZ}$ by sampling $Z \sim q_Z$ followed by $X \sim q_{X \mid Z = Z}$. We thus propose cross-entropy Monte Carlo gradient descent (CE-MCGD) and EM Monte Carlo gradient descent (EM-MCGD), which estimates the gradients in Equations~\ref{eq:cross-entropy-gradients} by using the sample estimator $\tilde{\mprms}_{XZ} = \sum_{j=1}^m \V s_{XZ}(X^{(j)}, Z^{(j)})$ of $\mprms_{XZ}$ with $m$ exact model samples $(X^{(j)}, Z^{(j)}) \sim q_{XZ}$, and the sample estimator $\tilde{\mprms}_{XZ}^{(i)} = \sum_{j=1}^l \V s_{XZ}(X^{(i)}, Z^{(j)})$ of $\mprms_{XZ}^{(i)}$ over $l$ conditional samples $Z^{(j)} \sim q_{Z \mid X=X^{(i)}}$ for every $X^{(i)}$.

\begin{figure}[t]
	\includegraphics{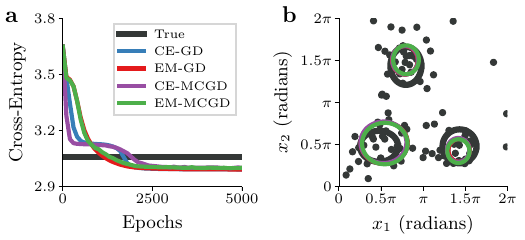}
	\caption{\emph{Training strategies for mixtures of von Mises densities.}\ \textbf{a:} True cross-entropy of a sample (black line) and model cross-entropies (coloured lines) over training epochs.\ \textbf{b:} Sample (black dots, $n = 100$) and precision ellipses (black lines) from the ground-truth von Mises mixture, and learned precision ellipses (coloured lines) for models trained with CE-GD (red), EM-GD (green), CE-MCGD (yellow), and EM-MCGD (blue) --- all ellipses heavily overlap.}\label{fig:learning-algorithms}
\end{figure}

We demonstrate the CE-GD, EM-GD, CE-MCGD, and EM-MCGD algorithms by training them on 100 sample points from a ground-truth mixture of 2D von Mises densities, where each component is a product of two von Mises distributions (Fig.~\ref{fig:learning-algorithms}). For CE-GD and EM-GD we compute the average gradient over the entire sample, so that each gradient step corresponds to one epoch. For CE-MCGD and EM-MCGD, we generated $m=10$ model samples, and $l=1$ conditional samples per training point $X^{(i)}$, and we use a batch size of 10 so that 10 gradient steps corresponds to one epoch. Finally, for the EM-based algorithms, we hold the conditional sufficient statistics $\tilde{\mprms}^{(i)}_{XZ}$ fixed for 100 epochs before re-estimating them. We find that all algorithms can perform well (Fig.~\ref{fig:learning-algorithms}\textbf{a}), and the learned densities for each model converge to a good approximation of the ground-truth density~(Fig.~\ref{fig:learning-algorithms}\textbf{a}).

\subsection{Linear Subspaces}

We highlight one final algorithmic trick for training a harmonium while restricting the solution to a linear subspace. Consider the harmonium $\mathcal H_{XZ}$ defined by the exponential families $\mathcal M_X$ and $\mathcal M_Z$, and suppose $\mathcal H'_{XZ}$ is the harmonium defined by the sufficient statistic $\V s'_{XZ} = \V A \cdot \V s_{XZ}$ for some matrix $\V A$ --- to ensure that $\mathcal H_{XZ}$ is minimal, $\V A$ should have full rank and at least as many columns as rows. In this case if $q'_{XZ} \in \mathcal H'_{XZ}$ with parameters $\eprms'_{XZ}$, then $q'_{XZ} \in \mathcal H_{XZ}$ with parameters $\eprms'_{XZ} \cdot \V A$. We can thus compute the mean parameters $\mprms'_{XZ}$ of $q'_{XZ}$ by first evaluating (or estimating) $\mprms_{XZ} = \V \tau(\eprms'_{XZ} \cdot \V A)$, and then using the linearity of expectations to compute $\mprms'_{XZ} = \V \tau'_{XZ}(\eprms'_{XZ}) = \V A \cdot \mprms_{XZ}$, where $\V \tau_{XZ}$ and $\V \tau'_{XZ}$ are the forward mappings of $\mathcal H_{XZ}$ and $\mathcal H'_{XZ}$, respectively. Consequently, for any of our proposed algorithms except EM, if we can use it to train $\mathcal H_{XZ}$, we can use it to train $\mathcal H'_{XZ}$. This technique is of course not relevant when we can compute the expectations in the subspace directly --- for example, FA is often applied to high-dimensional observables, and clever linear algebra allows one to avoid computing the sample covariances directly in the (high-dimensional) observable space that are required to implement EM\@. On the other hand, we could use this technique to train a Gaussian-Boltzmann distribution (e.g.\ Fig.~\ref{fig:gaussian-boltzmann}) while restricting the interactions between neurons in the latent Boltzmann machine to e.g.\ neighbouring neurons in a lattice.

\begin{figure}[t]
	\includegraphics{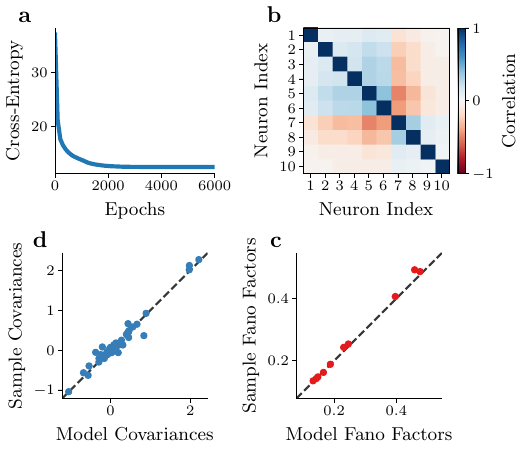}
	\caption{\emph{A factor analysis analogue for count-data.} \textbf{a:} Cross-entropy descent of a CoM-based mixture model trained on synthetic multivariate count data.\ \textbf{b:} Learned correlation matrix between the counts.\ \textbf{c-d:} Covariances (\textbf{c}) and Fano Factors (\textbf{d}) learned by the model compared to the sample estimators.}\label{fig:com-based-mixture}
\end{figure}

Another example for this technique is the so called CoM-based mixture model for modelling multivariate count-data~\cite{sokoloski_modelling_2021}. A Conway-Maxwell (CoM) Poisson distribution is an exponential family count distribution with distinct location and shape parameters~\cite{shmueli_useful_2005,stevenson_flexible_2016}, and CoM-based mixture models mix products of CoM-Poisson distributions. The parameters of the CoM-based mixture model are restricted (in a manner similar to FA) so that the latent category $K$ only modulates the location, and not the shape of the CoM-Poisson distributions. There is no efficient method for computing the expectations on this restricted parameter space, and so they are instead computed in the unrestricted parameter space.

Here we train a CoM-based mixture model on synthetic multivariate count data using the EM-GD algorithm (Fig.~\ref{fig:com-based-mixture}\textbf{a}), and show that it can capture rich correlation structure (Fig.~\ref{fig:com-based-mixture}\textbf{b, c}). The Fano factor is the variance over the mean of a random variable, and in contrast with mixtures of Poisson distributions, mixtures of CoM-Poisson distributions can model counts with Fano factors smaller than 1 (Fig.~\ref{fig:com-based-mixture}\textbf{d}).

\section{Discussion}

On one hand, Bayesian inference provides a general and rigorous approach to inferring information about unknown quantities given observations~\cite{jaynes_probability_2003}. On the other hand, learning a model through the method of maximum likelihood is grounded in an axiomatic characterization of how to minimize the divergence of a model from a target (i.e.\ empirical) distribution~\cite{shore_axiomatic_1980}. In this paper we developed a theory of exponential family LVMs we call conjugated harmoniums, that afford both exact Bayesian inference and direct maximum-likelihood learning. Moreover, we extended classical results on conjugate priors, and derived the most general conditions under which an arbitrary exponential family-distributed likelihood function can have a conjugate prior family.

Although our goal has been to avoid approximation schemes, we see this work as complementary to methods for approximate inference and learning. Variational autoencoders, for example, typically approximate the intractable posterior of the model with a tractable exponential family~\cite{shekhovtsov_vae_2021}, and this exponential family need not be Gaussian or categorical, as is often assumed~\cite{vahdat_undirected_2020}. Our theory thus provides numerous candidate models with sufficient tractability to model the latent space of a variational autoencoder.

LVMs also play a large role in computational neuroscience, where they are used to model how behavioural and environmental variables are encoded in low-dimensional submanifolds of neural activity~\cite{cunningham_dimensionality_2014,schneidman_towards_2016,langdon_unifying_2023}. In particular, Boltzmann machines and products of Poisson distributions are both examples of exponential families that are widely applied to modelling neural activity, and we showed how they both serve as effective components of conjugated harmoniums. Our theory also helps address a fundamental question in computational neuroscience: do neural circuits implement approximate inference and learning, or is neural connectivity constrained so that exact inference and learning are possible~\cite{shivkumar_probabilistic_2018,lange_bayesian_2023}? By expanding the scope of neural models that exhibit exact inference and learning, we should be better able to isolate those neural circuits that can indeed avoid approximation schemes.

As mentioned in Section~\ref{sec:harmoniums}, there are extensions of exponential family harmoniums to modelling multiple random variables with a graphical model structure~\cite{yang_graphical_2015,tansey_vector-space_2015}. While our theory of conjugation can be applied directly to these models by essentially coarse-graining the graph, in future work we aim to take full advantage of this graphical model structure. Indeed, sequential data models such as hidden Markov models and linear state space models both afford exact inference and learning~\cite{sarkka_bayesian_2013}, suggesting that a recursive strategy to conjugation over complex graphical models should be possible. Ultimately, a unified theory of conjugation across dynamical and graphical models would fundamentally push the boundaries of models that support exact inference and learning, opening doors to the efficient modeling of large numbers of observable and latent random variables with intricate, dynamic interdependencies.

\begin{appendix}
	\section{Parameter Transformations}\label{app:parameter-transformations}

	In this appendix we demonstrate how to derive the harmonium parameterization of a number of LVMs.

	\subsection{Mixture Models}\label{app:mixture-model-to-harmonium}

	Consider the mixture density $q_X(x ; \V w, \V \lambda_0, \ldots, \V \lambda_{d_K}) = \sum_{i=0}^d w_i q(x; \V \lambda_i)$ defined by the ${d_K}+1$ component densities $q(x; \V \lambda_i)$ with parameters $\V \lambda_i$, and weights that satisfy $w_0 = \sum_{i=1}^{d_K} w_i$ and $w_i > 0$. We can interpret this sum as the observable distribution $q_X$ of the LVM $q_{XK} = q_{X \mid K} \cdot q_X$ by introducing a latent index $K$ with values $\mathcal K = \{0, \dots, d_K\}$, and defining the likelihood and prior of the LVM as $q_{X \mid K}(x \mid k) = q(x; \V \lambda_k)$ and $q_K(k) = w_k$ for every $k$, respectively.

	Based on the formulas for the conjugation parameters in Theorem~\ref{thm:categorical-conjugation}, let us derive the natural parameters $\eprms_X$, $\eprms_K$, and $\iprms_{XK}$ given the component parameters $\V \lambda_0, \dots, \V \lambda_{d_K}$ and weights $\V w$ of $q_{XK}$. Let us assume that $\V f$ maps the component parameters $\V \lambda_i$ to the natural parameters of $q_{X \mid K = i} \in \mathcal M_X$. Because $\V s_K(0) = \V 0$, the definition of the harmonium likelihood (Eq.~\ref{eq:likelihood-density}) implies that $\eprms_X = \V f(\V \lambda_0)$. For $k > 0$, the natural parameters of $q_{X \mid Z=k}$ are $\eprms_X - \eprms_{XK,i}$, where $\eprms_{XK,i}$ is the $i$th column of $\iprms_{XZ}$, so that $\eprms_{XK,i} = \V f(\lambda_i) - \V f(\lambda_0)$. Finally, we can in general express a categorical density with weights $\V w$ in exponential family form with natural parameters $\eprms^*_K$ by defining $\theta^*_{K,i} = \log w_i - \log \sum_{j=1}^{d_K} w_j$ for $i \in \mathcal K$s. Therefore, according to Corollary~\ref{thm:conjugated-prior}, the latent biases $\eprms_K$ of $q_{XZ}$ are given by $\eprms^*_K - \rprms$.

	\subsection{Linear Gaussian Models}\label{app:linear-gaussian-to-harmonium}

	Where $\Normal(\V m, \V \Sigma)$ is a multivariate normal density with mean $\V m$ and covariance $\V \Sigma$, suppose we are given $q_{XZ}$ in terms of the prior $q_Z = \Normal(\V m_Z, \V \Sigma_Z)$ and likelihood $q_{X \mid Z = \V z} = \Normal(\V m_X + \V W \cdot \V z, \V \Sigma_X)$. We can express $q_{XZ} \in \mathcal G_{XZ}$ in the (homoscedastic) harmonium form
	\begin{equation*}
		q_{XZ}(\V x, \V z) \propto e^{\V x \cdot \eprms^m_X + \V x \cdot \iprms^\sigma_X \cdot \V x + \V z \cdot \eprms^m_Z + \V z \cdot \iprms^\sigma_Z \cdot \V z + \V x \cdot \iprms^{m}_{XZ} \cdot \V z},
	\end{equation*}
	where
	\begin{align*}
		\eprms^m_X      & = \V \Sigma_X^{-1} \cdot \V m_X,               \\
		\iprms^\sigma_X & = -\frac{1}{2} \V \Sigma_X^{-1},               \\
		\iprms^{m}_{XZ} & = \V \Sigma_X^{-1} \cdot \V W,                 \\
		\eprms^m_Z      & = \V \Sigma_Z^{-1} \cdot \V m_Z - \rprms^m,    \\
		\iprms^\sigma_Z & = -\frac{1}{2} \V \Sigma_Z^{-1} - \V P^\sigma,
	\end{align*}
	and $\rprms^m$ and $\V P^\sigma$ are given by Equations~\ref{eq:linear-gaussian-conjugation}.

	\section{Proofs}

	\printProofs{}

\end{appendix}

\begin{acks}[Acknowledgments]
	This paper has been shaped by countless conversations I have had over the years with colleagues and friends. Nevertheless, I would like to thank Philipp Berens in particular for additional conversations and support that allowed me to finalize this manuscript.
\end{acks}
\begin{funding}
	This research was supported by the Hertie Foundation (Gemeinnützige Hertie-Stiftung); the Deutsche Forschungsgesellschaft (DFG) Sonderforschungsbereich (SFB) 1233, ``Robust Vision: Inference Principles and Neural Mechanisms'', Teilprojekt (TP) 13, project number: 276693517; and the DFG Cluster of Excellence ``Machine Learning --- New Perspectives for Science'', EXC 2064.
\end{funding}

\begin{supplement}
	\stitle{Code for Numerical Libraries}
	\sdescription{This supplement contains the source code used for the numerical demonstrations in the manuscript. The code provides implementations of the inference and learning algorithms described in the paper. Numerous example scripts are also provided, a subset of which constitutes the implementations behind the demonstrations and plots featured in this paper. A publicly accessible version is also available at \url{https://github.com/alex404/goal}.}
\end{supplement}

\bibliographystyle{imsart-number} 
\bibliography{library}       


\end{document}